\documentclass[pdflatex,sn-apa]{sn-jnl}

\usepackage{bookmark}  
\usepackage{graphicx}%
\usepackage{multirow}%
\usepackage{amsmath,amssymb,amsfonts}%
\usepackage{amsthm}%
\usepackage[title]{appendix}%
\usepackage{xcolor}%
\usepackage{textcomp}%
\usepackage{manyfoot}%
\usepackage{booktabs}%
\usepackage{algorithm}%
\usepackage{algorithmicx}%
\usepackage{algpseudocode}%
\usepackage{listings}%

\theoremstyle{thmstyleone}%
\newtheorem{theorem}{Theorem}
\theoremstyle{thmstyletwo}%

\theoremstyle{thmstylethree}%

\theoremstyle{thmstyleone}%
\newtheorem{lemma}{Lemma}%
\newtheorem{corollary}{Corollary}%

\begin{document}

\title[Optimal Spatio-Temporal Decoupling for BCP]{Optimal Spatio-Temporal Decoupling for Bayesian Conformal Prediction}


\author[1]{\fnm{Yu-Hsueh} \sur{Fang}}\email{d11725001@ntu.edu.tw}

\author*[1]{\fnm{Chia-Yen} \sur{Lee}}\email{chiayenlee@ntu.edu.tw}

\affil[1]{\orgdiv{Department of Information Management}, \orgname{National Taiwan University}, \orgaddress{\street{No. 1, Sec. 4, Roosevelt Rd., Da'an Dist.}, \city{Taipei}, \postcode{106216}, \country{Taiwan}}}


\begin{abstract}

Online conformal prediction must balance fast adaptation to distribution shift against stable coverage: feedback-driven methods react quickly but become volatile, while strongly discounted Bayesian methods lag and inflate intervals at tight coverage. We introduce \textbf{State-Adaptive Bayesian Conformal Prediction (SA-BCP)}, which forms the predictive quantile as a gated convex combination of long-term temporal inertia and local spatial evidence from a kernel density estimate, controlled by a single interpretable evidence threshold $K$. We establish three results: (i) asymptotic marginal validity of the resulting intervals up to a gate-controlled bias that vanishes as spatial evidence accumulates (exact under recurrent states); (ii) a closed-form expression for the MSE-optimal threshold, $K^*_{\mathrm{MSE}}=\alpha(1-\alpha)/M^{\mathcal{T}}$, trading the coverage-indicator (Bernoulli) variance against the temporal structural bias $M^{\mathcal{T}}$; and (iii) a rolling-origin procedure for selecting $K$ online---consistent under stationarity, with $O(\sqrt{T\log N})$ regret against the best fixed $K$ and, for a segmented variant, a sublinear dynamic-regret bound under sublinearly many ($B_T=o(T)$) threshold shifts. Across four financial-volatility and weather datasets, three target coverage levels, and eight baselines, SA-BCP attains at-or-above-nominal coverage in most settings while producing substantially sharper intervals---up to roughly $3\times$ lower Winkler score than discounted Bayesian CP at the tightest coverage---and a coverage-matched audit confirms these efficiency gains are not an artifact of under-coverage. We disclose our principal limitation: a volatility-specialized CF-GARCH competitor remains more efficient on its home volatility-base series, though it does not transfer across domains.
\end{abstract}

\keywords{conformal prediction, online learning, uncertainty quantification, non-stationary time series, distribution-free inference}

\maketitle
\section{Introduction}
\label{sec:introduction}
Uncertainty quantification in non-stationary time series remains a fundamental challenge \citep{Hullermeier21}. Conformal Prediction (CP) \citep{Vovk05} gives finite-sample marginal coverage under exchangeability, but real data streams violate this through gradual drift and abrupt regime shifts, and adapting CP to them faces a persistent \textbf{adaptation--stability tradeoff}. Feedback-driven methods---ACI \citep{Gibbs21}, AgACI \citep{Zaffran22}, DtACI \citep{Gibbs24}---over-optimize during stable regimes and under-cover during abrupt volatility clustering; temporally discounted Bayesian CP \citep{Zhang2024} avoids collapse but incurs structural lag, producing bloated intervals after a shock subsides, especially at the tight 95\% level relevant to operational risk thresholds in finance and weather forecasting.

State-conditional calibration \citep{Chen24, Jiang24} exploits the fact that seemingly unprecedented shifts often recur as variants of historical states, but purely spatial kernel weighting collapses the effective sample size and under-covers when matches are sparse. Neither pure temporal nor pure spatial weighting suffices: the two should be \emph{decoupled and combined adaptively}. We propose \textbf{State-Adaptive Bayesian Conformal Prediction (SA-BCP)}, which gates long-term temporal inertia with spatial kernel-density evidence through a single interpretable threshold $K$, isolating temporal drift from spatial recurrence and treating each with structurally appropriate evidence aggregation.

\textbf{Contributions.} Our work makes the following contributions:

\begin{enumerate}
    \item \textbf{A spatio-temporal decoupled CP framework} with an interpretable evidence threshold $K$, base-model agnostic across GARCH, EWMA, AR(1), and Ridge predictors.
    \item \textbf{Rolling-origin $K$-selection with three guarantees:} consistency to the Winkler grid-optimum $\tilde{K}$, $O(\sqrt{T \log N})$ regret against the best fixed $K$, and segmented $O(\sqrt{T(B_T+1)})$ drift safety (Theorem~\ref{thm:rolling_consistency}).
    \item \textbf{Theoretical foundations:} approximate marginal validity under sublinear drift (Theorem~\ref{thm:marginal_validity}; exact under recurrent states, Corollary~\ref{cor:exact_recurrent}) and the closed-form optimum $K^*_{\mathrm{MSE}}=\alpha(1-\alpha)/M^{\mathcal{T}}$ (Theorem~\ref{thm:mse_optimal}).
    \item \textbf{Comprehensive evaluation} across eight baselines (including the strongest recent conditional-quantile methods, SPCI and KOWCPI), four datasets spanning financial volatility and weather, three coverage levels, and three assumption-violation stress scenarios, with block-bootstrap CIs (block 20, $B=1000$) and a coverage-matched audit separating raw efficiency from under-coverage artifacts.
\end{enumerate}

\textbf{Findings.} Across the 24 settings SA-BCP holds at-or-above-nominal coverage while remaining the sharpest \emph{calibrated} method in most cells; the temporal--spatial decoupling is necessary (it beats its spatial-only ablation in every setting) and the efficiency gains survive a coverage-matched audit, while CF-GARCH, our principal limitation, is sharper in some cells largely through under-coverage and does not transfer (full per-cell results in Section~\ref{sec:experiments}).

\textbf{Paper organization.} Section~\ref{sec:related_work} situates SA-BCP within the CP literature. Section~\ref{sec:methodology} presents the algorithm including rolling-origin $K$-selection. Section~\ref{sec:theory} establishes the theoretical guarantees. Section~\ref{sec:experiments} reports the empirical evaluation. Section~\ref{sec:discussion} discusses limitations and connections to concurrent work. All theorems are proved in the main text, where every headline result---the full nine-method benchmark, the coverage-matched audit, and the decoupling ablation---also appears.

\section{Related Work}
\label{sec:related_work}
\noindent\textbf{Non-exchangeable and online adaptive CP.} Classical CP \citep{Vovk05} gives finite-sample marginal coverage under exchangeability. To go beyond it, \citet{Tibshirani19} reweight by likelihood ratios for covariate shift and \citet{Barber23} establish coverage under bounded distribution shift via non-exchangeable CP (NExCP), using decay-weighted nonconformity quantiles; we use NExCP ($\rho=0.99$) as a primary distribution-free baseline. Recent work shows even unweighted split CP retains guarantees on some non-exchangeable series \citep{oliveira24, Barber26ALT}. A complementary thread adapts online by adjusting the miscoverage rate: ACI \citep{Gibbs21} updates $\alpha_t$ from the empirical coverage gap, with AgACI \citep{Zaffran22} aggregating over learning rates and DtACI \citep{Gibbs24} adding dynamic thresholds with regret bounds. These hold long-run marginal coverage but lag abrupt shifts---$\alpha_t$ adjusts slowly relative to volatility-clustering timescales---producing under-coverage during transitions, which Section~\ref{sec:experiments} quantifies against SA-BCP's proactive recognition.

\noindent\textbf{Bayesian, discounted, and GARCH-hybrid CP.} Bayesian CP (BCP) \citep{Zhang2024} exponentially discounts the empirical CDF, bounding dynamic regret against adversarial shifts; the discount $\beta$ trades memory against adaptation speed. BCP avoids ACI's coverage collapse but has two structural limits: \emph{structural lag} (extreme residuals keep inflating intervals after volatility normalizes) and uncalibrated tight-coverage intervals (over-covering 3--5pp at 95\% with $2$--$4\times$ widths). SA-BCP keeps BCP's collapse protection as a worst-case fallback while resolving the lag through spatial memory. For volatility specifically, conformalized GARCH (CF-GARCH) conformalizes studentized residuals $|Y_t-\hat{f}_t(X_t)|/\hat\sigma_t$ with a \emph{parametric GARCH plug-in} for the scale $\hat\sigma_t$ \citep{Bollerslev1986, Lei18}; tied to that parametric volatility model, it is a strong baseline on smooth financial series but does not transfer to non-volatility bases (AR, Ridge), whereas SA-BCP operates on absolute residuals with a base-model-agnostic mixture.

\noindent\textbf{State-conditional and conditional-quantile CP.} Since non-stationarity often manifests as recurring states, state-conditional methods calibrate locally: \citet{Chen24} use semantic-embedding states and \citet{Jiang24} localized quantile regression over nearby states, but pure spatial weighting collapses the effective sample size when matches are sparse and under-covers (quantified by our Localized-CP ablation in Section~\ref{sec:experiments}). A related line estimates a \emph{conditional quantile} of the score: SPCI \citep{xu23} via a quantile random forest on a residual window (extending EnbPI \citep{Xu21}) and KOWCPI \citep{lee25} via an optimally reweighted Nadaraya--Watson kernel. These are weighted empirical-quantile predictors like SA-BCP, and KOWCPI's optimal weights subsume the fixed-weight \citep{Barber23} and kernel-localized \citep{Jiang24} schemes of which our Localized ablation is a special case---so we do \emph{not} claim kernel weighting per se as novel. SA-BCP's contribution is instead structural: a gated convex mixture of a \emph{state-agnostic} discounted channel and a \emph{time-agnostic} spatial channel through one evidence threshold $K$ with closed-form optimum $K^*_{\mathrm{MSE}}=\alpha(1-\alpha)/M^{\mathcal{T}}$ (Theorem~\ref{thm:mse_optimal}) and rolling-origin guarantees (Theorem~\ref{thm:rolling_consistency}); the discounted fallback and the evidence gate have no counterpart in SPCI/KOWCPI, which target stationary, strongly-mixing series rather than validity under regime change.

\section{Methodology}
\label{sec:methodology}

\subsection{Problem Setup and Notation}
\label{sec:problem_setup}

Consider an online time-series forecasting setting where at each time step $t \in \{1, 2, \ldots, T\}$, we observe a feature vector $X_t \in \mathcal{X}$ and seek to predict a continuous target $Y_t \in \mathbb{R}$. We follow the standard inductive CP setup with a base point predictor $\hat{f}_t: \mathcal{X} \to \mathbb{R}$ that may itself be updated online (e.g., a Fast Online GARCH model, an EWMA scale estimator, or a Ridge regressor with online updates). The base predictor is treated as a black-box; SA-BCP is base-model agnostic.

Our goal is to construct a prediction interval $C_t(X_t) = [\hat{f}_t(X_t) - \hat{q}_t, \hat{f}_t(X_t) + \hat{q}_t]$, where the half-width $\hat{q}_t \geq 0$ is computed by SA-BCP, such that:
\begin{equation}
    \mathbb{P}(Y_t \in C_t(X_t)) \geq 1 - \alpha
    \label{eq:coverage_target}
\end{equation}
for a user-specified miscoverage rate $\alpha \in (0, 1)$. The corresponding non-conformity score is the model-agnostic absolute residual $E_t = |Y_t - \hat{f}_t(X_t)|$. We deliberately use the absolute residual rather than a volatility-studentized score (as in CF-GARCH): studentization sharpens intervals when the base predictor is itself a volatility model, but forfeits transfer to mean-model bases (AR, Ridge) and non-financial domains---a trade-off our experiments make explicit (Section~\ref{sec:experiments}). Throughout the paper, $\mathbb{I}(\cdot)$ denotes the indicator function. Let $\mathcal{F}_{t-1}$ be the $\sigma$-algebra generated by all observations up to time $t-1$. We adopt the convention that the spatial state $S_t$ and the spatial bandwidth $h$ are constructed using only $\mathcal{F}_{t-1}$-measurable quantities; observations $Y_t$ and base predictions $\hat{f}_t(X_t)$ enter the buffer only \emph{after} $Y_t$ is revealed at the end of step $t$. This protocol ensures that $\hat{q}_t$ is $\mathcal{F}_{t-1}$-measurable, a property essential for the martingale arguments in Section~\ref{sec:theory}.

\subsection{Spatio-Temporal Decoupling}
\label{sec:spatio_temporal_density}

The core mechanism of SA-BCP decouples two distinct sources of evidence about the conditional distribution of $E_t$: \emph{temporal recency} (recent observations are more relevant than old ones) and \emph{spatial similarity} (observations from similar past states are most relevant).

\noindent\textbf{Temporal density.} The temporal base density $D^{\mathcal{T}}_t$ implements discounted weighting inspired by Bayesian CP \citep{Zhang2024}:
\begin{equation}
    D^{\mathcal{T}}_t(i) = \beta^{t - 1 - i}, \quad \forall i < t
    \label{eq:temporal_density}
\end{equation}
where $\beta \in (0, 1)$ is the discount factor and $i$ indexes historical time. The total temporal weight is $D^{\mathcal{T}}_t = \sum_{i=0}^{t-1} \beta^{t-1-i} = (1 - \beta^t) / (1 - \beta)$, which converges to $1/(1-\beta)$ as $t \to \infty$. This component captures temporal inertia: recent residuals dominate the distribution estimate, but historical residuals retain non-zero weight, preventing the catastrophic forgetting that would occur under a sharp moving window.

\noindent\textbf{Spatial state and density.} We extract a local spatial state $S_t \in \mathbb{R}^d$ from a fixed-length window $w$ of past residuals and predictions. Specifically, $S_t = (E_{t-w}, \ldots, E_{t-1}, \hat{f}_{t-w}, \ldots, \hat{f}_{t-1})$, yielding $d = 2w$. We choose $w = 5$ for financial benchmarks and $w \in \{5, 7\}$ for weather benchmarks; the regime-recognition synthetic (Figure~\ref{fig:recognition}) uses $w = 1$ to isolate the recognition mechanism, while the theorem-validation synthetic uses $w = 5$.

We measure the similarity between the current state $S_t$ and each historical state $S_i$ using an anisotropic Gaussian kernel:
\begin{equation}
    D^{\mathcal{S}}_t(i) = \exp\left( -\frac{1}{2} \sum_{j=1}^d \left( \frac{S_{t,j} - S_{i,j}}{h_j} \right)^2 \right)
    \label{eq:spatial_density}
\end{equation}
where $S_{t,j}$ denotes the $j$-th component of $S_t$ and $h_j$ is the per-dimension bandwidth. The total spatial evidence is:
\begin{equation}
    D^{\mathcal{S}}_t = \sum_{i=0}^{t-1} D^{\mathcal{S}}_t(i)
    \label{eq:spatial_total}
\end{equation}

The per-dimension bandwidth $h_j$ is set online by a Scott-style rule and depends only on $\mathcal{F}_{t-1}$-measurable quantities; the exact rule and cold-start fallback are given in Section~\ref{sec:implementation}.

The crucial innovation of SA-BCP is how it adjudicates between the temporal base (recent inertia) and the spatial density (historical pattern memory). We introduce a single interpretable hyperparameter $K \geq 0$, the \emph{target matching number}, which acts as an evidence threshold. The spatial mixing weight $\pi^{\mathcal{S}}_t \in [0, 1]$ is defined as:
\begin{equation}
    \pi^{\mathcal{S}}_t = \frac{D^{\mathcal{S}}_t}{D^{\mathcal{S}}_t + K}
    \label{eq:mixture_gate}
\end{equation}

This formulation acts as an epistemic confidence gate with two regimes: in an \textbf{unprecedented state} ($D^{\mathcal{S}}_t \ll K$), $\pi^{\mathcal{S}}_t \to 0$ and the model conservatively defers to the temporal base, inheriting discounted-BCP's collapse protection \citep{Zhang2024} (Corollary~\ref{cor:collapse_protection}); in a \textbf{recognized regime} ($D^{\mathcal{S}}_t \gg K$), $\pi^{\mathcal{S}}_t \to 1$ and the model pivots to spatial memory, allowing proactive interval adjustment based on state recognition rather than reactive temporal feedback (Corollary~\ref{cor:exact_recurrent}).

The temporal mixing weight is $\pi^{\mathcal{T}}_t = 1 - \pi^{\mathcal{S}}_t$. The threshold $K$ trades spatial recognition against temporal stability; we derive the MSE-optimal $K^*_{\mathrm{MSE}}$ for this tradeoff in Theorem~\ref{thm:mse_optimal} and select $K$ from data in Section~\ref{sec:rolling_origin}. Figure~\ref{fig:recognition} illustrates the gate across a regime change.

\begin{figure}[ht]
    \centering
    \includegraphics[width=0.72\textwidth]{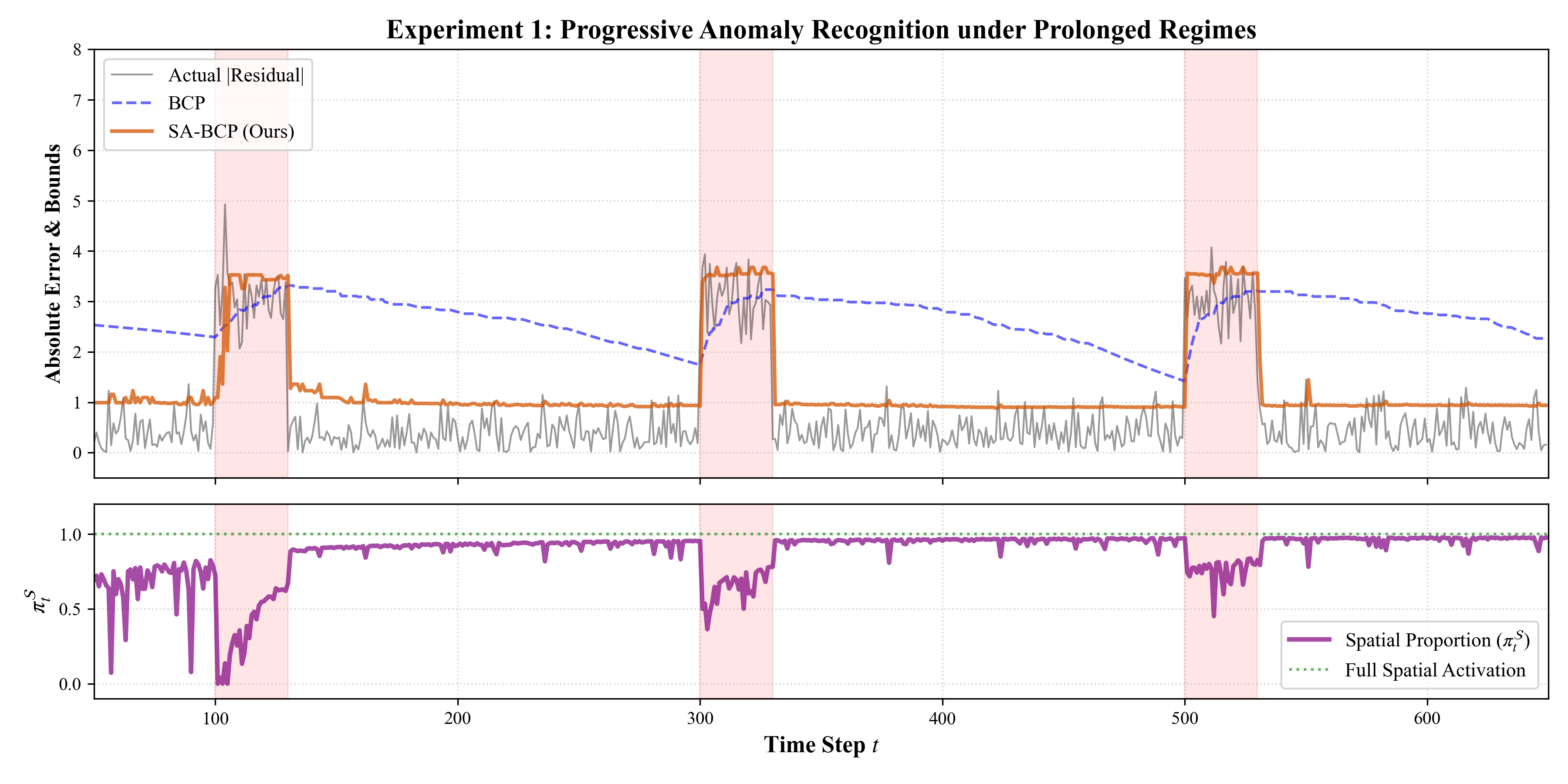}
    \caption{Progressive anomaly recognition (illustrative regime change). As a state recurs, the accumulated spatial density $D^{\mathcal{S}}_t$ raises the spatial weight $\pi^{\mathcal{S}}_t$, so SA-BCP pivots from the temporally discounted baseline to state-based recognition and adjusts the interval proactively rather than reacting only after coverage has been lost.}
    \label{fig:recognition}
\end{figure}

\subsection{State-Adaptive Quantile Construction}
\label{sec:quantile_construction}

Given the mixing weights $\pi^{\mathcal{S}}_t$ and $\pi^{\mathcal{T}}_t$, we construct the state-adaptive empirical CDF for the non-conformity scores by mixing two component CDFs:
\begin{align}
    \hat{F}^{\mathcal{S}}_t(r) &= \frac{\sum_{i=0}^{t-1} D^{\mathcal{S}}_t(i) \, \mathbb{I}(E_i \leq r)}{D^{\mathcal{S}}_t} \label{eq:spatial_cdf} \\
    \hat{F}^{\mathcal{T}}_t(r) &= \frac{\sum_{i=0}^{t-1} D^{\mathcal{T}}_t(i) \, \mathbb{I}(E_i \leq r)}{D^{\mathcal{T}}_t} \label{eq:temporal_cdf}
\end{align}

To handle the cold-start period when both spatial and temporal evidence are weak, we incorporate a uniform prior $\mathrm{Unif}(0, R)$, where $R$ is a fixed scale constant ($R = 15$ for financial data with returns scaled by 100; $R = 10$ for Jena temperature; $R = 3$ for synthetic data). The prior is weighted by $\lambda_t = 1/\sqrt{1+t}$, which decays to zero as $t$ grows so that the prior has no asymptotic influence. The final mixture CDF is:
\begin{equation}
    \hat{F}_t(r) = (1 - \lambda_t) \left[ \pi^{\mathcal{S}}_t \hat{F}^{\mathcal{S}}_t(r) + \pi^{\mathcal{T}}_t \hat{F}^{\mathcal{T}}_t(r) \right] + \lambda_t \cdot \min\!\left(\frac{r}{R}, 1\right)
    \label{eq:mixture_cdf}
\end{equation}

At each step $t$, we solve for the critical quantile $\hat{q}_t$ such that $\hat{F}_t(\hat{q}_t) = 1 - \alpha$, using Brent's root-finding algorithm \citep{brent2013} on the interval $[0, 1.5R]$. The cold-start uniform prior term $\lambda_t \cdot \min(r/R, 1)$ ensures strict monotonicity of $\hat{F}_t$ on $[0, R]$, guaranteeing a unique root. The prediction interval is then $C_t(X_t) = [\hat{f}_t(X_t) - \hat{q}_t, \hat{f}_t(X_t) + \hat{q}_t]$.

\subsection{Rolling-Origin \texorpdfstring{$K$}{K}-Selection}
\label{sec:rolling_origin}

While Theorem~\ref{thm:mse_optimal} characterizes the MSE-optimal $K^*_{\mathrm{MSE}}$ analytically as a ratio of unobservable quantities (the coverage-indicator variance and the temporal bias), practical deployment requires selecting $K$ from data alone. Selecting $K$ by sweeping over the full evaluation window and reporting the best Winkler score introduces in-sample selection bias that conflates method performance with hyperparameter tuning---a concern for online CP broadly, whose hyperparameters ($\gamma$ in ACI, $\beta$ in BCP, $K$ here) are often reported at hindsight-fixed values \citep{Zaffran22}. We address this by introducing a \emph{rolling-origin $K$-selection} protocol, which mirrors expanding-window cross-validation for time series and is, to our knowledge, the first systematic no-peeking selection rule for online adaptive CP.

\noindent\textbf{Protocol.}
We select $K$ by \emph{rolling-origin} validation: the stream is split into a warmup period and an evaluation horizon divided into consecutive blocks (calendar years in our experiments); for each block we pick $\hat{K} = \arg\min_{K \in \mathcal{K}_{\mathrm{grid}}} \mathrm{Winkler}(K; \mathcal{D})$ on all post-warmup data strictly preceding the block, then deploy it throughout that block (Algorithm~\ref{alg:rolling_origin}). The candidate grid is 12 logarithmically spaced values from $10^{-3}$ to $10^{3}$; the warmup size and block schedule are specified in Section~\ref{sec:exp_setup}.

\begin{algorithm}[ht]
\caption{Rolling-Origin $K$-Selection for SA-BCP}
\label{alg:rolling_origin}
\begin{algorithmic}[1]
\State \textbf{Input:} stream $\{(X_t, Y_t)\}_{t=1}^T$, warmup size $T_{\mathrm{warm}}$, evaluation blocks $\{B_1, \ldots, B_M\}$ partitioning $\{T_{\mathrm{warm}}{+}1, \ldots, T\}$, grid $\mathcal{K}_{\mathrm{grid}}$, miscoverage rate $\alpha$
\State \textbf{Output:} predictions $\{\hat{q}_t\}_{t=T_{\mathrm{warm}}+1}^T$, per-block selections $\{\hat{K}_m\}_{m=1}^M$
\State Run SA-BCP on warmup data to initialize buffers, bandwidths, and base model
\For{each evaluation block $B_m$, $m = 1, \ldots, M$}
    \State $\mathcal{D}_{<m} \leftarrow$ all post-warmup observations strictly preceding block $B_m$
    \For{each candidate $K \in \mathcal{K}_{\mathrm{grid}}$}
        \State Compute cumulative Winkler score $\widehat{W}(K; \mathcal{D}_{<m})$ using SA-BCP with this $K$
    \EndFor
    \State $\hat{K}_m \leftarrow \arg\min_{K \in \mathcal{K}_{\mathrm{grid}}} \widehat{W}(K; \mathcal{D}_{<m})$
    \State Deploy SA-BCP with $\hat{K}_m$ for all $t \in B_m$
\EndFor
\end{algorithmic}
\end{algorithm}

\noindent\textbf{Guarantees and efficiency.} Theorem~\ref{thm:rolling_consistency} gives three guarantees on this selector---consistency to the Winkler grid-optimum $\tilde{K}$, $O(\sqrt{T \log N})$ regret against the best fixed $K$, and segmented $O(\sqrt{T(B_T + 1)})$ drift safety (a worst-case bound, not a dominance guarantee). Empirically (Section~\ref{sec:experiments}), the no-peeking selection costs under $1\%$ of the full-sweep hindsight-oracle Winkler in every cell (maximum $0.71\%$).

\subsection{Implementation Details}
\label{sec:implementation}

SA-BCP is independent of the base predictor; only the residuals $E_t$ enter the algorithm (we use Fast Online GARCH(1,1), RiskMetrics EWMA, online AR(1)/RLS, and online Ridge across experiments; see Section~\ref{sec:experiments}). We fix $\beta = 0.99$ and window $w = 5$ for financial data ($w = 5$ for AR(1), $w = 7$ for Ridge on Jena); the candidate grid for $K$ is 12 logarithmically spaced values from $10^{-3}$ to $10^3$, and the uniform-prior cap $R$ is set per experiment (Section~\ref{sec:quantile_construction}). The anisotropic spatial bandwidth follows a Scott-style rule $h_j = \hat{\sigma}_j n^{-1/(d+4)}$ \citep{li2007}, with $\hat{\sigma}_j$ maintained by Welford's online algorithm and a cold-start default $h_j = 5.0$ for $n < 20$; all such quantities are $\mathcal{F}_{t-1}$-measurable. Each update is $O(n)$ in the accumulated memory $n$ (dominated by the kernel sum), sub-millisecond per step.

\section{Theoretical Guarantees}
\label{sec:theory}
Throughout, $\mathbb{I}(\cdot)$ is the indicator function, $\mathcal{F}_t$ the $\sigma$-algebra of observations through time $t$, and the measurability protocol of Section~\ref{sec:problem_setup} ($S_t$, $h_j$, $\hat{q}_t$ built only from $\mathcal{F}_{t-1}$-measurable quantities) is assumed.

\begin{theorem}[Approximate Asymptotic Marginal Validity]
\label{thm:marginal_validity}
Assume the residual process is weakly dependent ($\phi$-mixing with summable coefficients), the conditional distribution $P_t(E_t \mid \mathcal{F}_{t-1})$ has a bounded, Lipschitz-continuous density, and the total-variation drift is sublinear: $\sum_{t=1}^T d_{\mathrm{TV}}(P_t, P_{t+1}) = o(T)$. Let $\hat{q}_t$ be the SA-BCP quantile sequence solving $\hat{F}_t(\hat{q}_t) = 1 - \alpha$, where $\hat{F}_t$ is the mixture CDF defined in Equation~\ref{eq:mixture_cdf}. Under the measurability protocol of Section~\ref{sec:problem_setup}, for any $\alpha \in (0,1)$ and any finite $K > 0$:
\begin{equation}
    \limsup_{T \to \infty} \Big|\frac{1}{T} \sum_{t=1}^T \mathbb{I}(E_t \leq \hat{q}_t) - (1-\alpha)\Big| \;\le\; C_\beta \quad \text{a.s.},
\end{equation}
where the discounting-bias constant splits into a first-order aggregation term and a second-order curvature term,
\[
C_\beta \;\le\; \Big|\limsup_{T\to\infty}\tfrac1T\textstyle\sum_{t=1}^T \pi^{\mathcal{T}}_t\,\mathrm{bias}^{\mathcal{T}}_t\Big| \;+\; \frac{\sup_t|f'(r^*_t)|}{2\inf_t f(r^*_t)^2}\big(c\,(1-\beta)+o(1)\big),
\]
with $\mathrm{bias}^{\mathcal{T}}_t=\mathbb{E}[\hat F^{\mathcal{T}}_t(r^*_t)]-F_t(r^*_t)$ the signed temporal bias whose lag and state-aggregation components are described in Lemma~\ref{lemma:spatial_cdf}(ii) and $c$ an absolute constant. The first-order term is weighted by the gate $\pi^{\mathcal{T}}_t$ and vanishes in the spatial-sufficient regime $\pi^{\mathcal{T}}_t\to0$ (Corollary~\ref{cor:exact_recurrent}); the second-order term collects the discounting variance $\mathcal{O}(1-\beta)$ through the density curvature, written with $|f'(r^*_t)|$ since $f'<0$ at a right-tail quantile and with $\sup_t/\inf_t$ because $r^*_t$ varies in $t$. Annealing the discount, $\beta=\beta_T\uparrow1$ with $1-\beta_T=\omega(T^{-1})$ and $\sum_{t=1}^T d_{\mathrm{TV}}(P_t,P_{t+1})=o\big(T(1-\beta_T)\big)$, removes the second-order (discounting-variance) term and the temporal-lag part of $\mathrm{bias}^{\mathcal{T}}_t$, but not its state-aggregation part; full exactness---all of $C_\beta\to0$---is established separately in Corollary~\ref{cor:exact_recurrent} under spatial sufficiency.
\end{theorem}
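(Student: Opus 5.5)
The argument splits the empirical coverage error into a martingale term and a per-step quantile-bias term, then controls each separately. Write $F_t(r)=\mathbb{P}(E_t\le r\mid\mathcal{F}_{t-1})$ for the conditional CDF and $r^*_t$ for its $(1-\alpha)$-quantile, so that $F_t(r^*_t)=1-\alpha$. Since $\hat q_t$ is $\mathcal{F}_{t-1}$-measurable by the measurability protocol,
\[
\frac1T\sum_{t=1}^T \mathbb{I}(E_t\le\hat q_t)-(1-\alpha)=\underbrace{\frac1T\sum_{t=1}^T\big(\mathbb{I}(E_t\le\hat q_t)-F_t(\hat q_t)\big)}_{M_T/T}+\frac1T\sum_{t=1}^T\big(F_t(\hat q_t)-F_t(r^*_t)\big).
\]
The first term is an average of bounded martingale differences. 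Azuma--Hoeffding combined with Borel--Cantelli, or the martingale SLLN, gives $M_T/T\to0$ almost surely. So the whole problem reduces to the second, Cesàro-averaged bias term.

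For the bias term I will linearize the quantile map. The root equation $\hat F_t(\hat q_t)=1-\alpha=F_t(r^*_t)$ together with a second-order Taylor expansion of $F_t$ around $r^*_t$ (allowed by the bounded Lipschitz density) gives
\[
F_t(\hat q_t)-F_t(r^*_t)=-\big(\hat F_t(r^*_t)-F_t(r^*_t)\big)+\frac{f'(r^*_t)}{2f(r^*_t)^2}\big(\hat F_t(r^*_t)-F_t(r^*_t)\big)^2+\text{h.o.t.}
\]
This is the standard Bahadur-type inversion, with the curvature correction $-F''/(2F'^2)$. Next I decompose $\hat F_t(r^*_t)-F_t(r^*_t)$ using Equation~\ref{eq:mixture_cdf}:
\begin{itemize}
\item the prior contribution is $\lambda_t\big(\min(r^*_t/R,1)-\hat F^{\text{mix}}_t\big)$, which is $O(\lambda_t)$ and so Cesàro-negligible because $\lambda_t=1/\sqrt{1+t}\to0$;
\item the spatial part is $\pi^{\mathcal S}_t(\hat F^{\mathcal S}_t-F_t)$, whose mean is controlled by Lemma~\ref{lemma:spatial_cdf}(i), with kernel bias vanishing as the evidence accumulates;
\item the temporal part is $\pi^{\mathcal T}_t(\hat F^{\mathcal T}_t-F_t)$, whose mean is $\pi^{\mathcal T}_t\,\mathrm{bias}^{\mathcal T}_t$ by definition, with its lag and state-aggregation pieces given by Lemma~\ref{lemma:spatial_cdf}(ii).
\end{itemize}
The centred fluctuations $\hat F^{\mathcal T}_t-\mathbb{E}\hat F^{\mathcal T}_t$ average out in the Cesàro sum. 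Here I would use a $\phi$-mixing SLLN for the weighted sums with summable coefficients, applied after blocking the discounted sums. Averaging then produces the first-order term $|\limsup_T \frac1T\sum_t\pi^{\mathcal T}_t\mathrm{bias}^{\mathcal T}_t|$.

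The quadratic term does not cancel on averaging. Its size is governed by the variance of the discounted ECDF, which has effective sample size $(1+\beta)/(1-\beta)$ and is therefore $\le c(1-\beta)$ by a mixing covariance inequality; the spatial contribution is at most of the same order. Bounding $|f'(r^*_t)|/(2f(r^*_t)^2)$ uniformly by $\sup_t/\inf_t$ gives the second term of $C_\beta$. The TV drift enters through the lag bias: it is bounded by $\sum_{s<t}(1-\beta)\beta^{t-1-s}\sum_{u=s}^{t-1}d_{\mathrm{TV}}(P_u,P_{u+1})$, and after Cesàro averaging this is $O\big(\sum_t d_{\mathrm{TV}}/(T(1-\beta))\big)$. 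For fixed $\beta$ this is part of the first-order term. Under the annealing conditions $1-\beta_T=\omega(T^{-1})$ and $\sum_t d_{\mathrm{TV}}=o(T(1-\beta_T))$ it vanishes, and the variance term $c(1-\beta_T)\to0$ as well. The state-aggregation part survives annealing, because the temporal channel pools residuals across states and so estimates a marginal rather than a conditional CDF.

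The main obstacle is making the Taylor step rigorous almost surely rather than in expectation. I have to show that $\hat F_t(r^*_t)-F_t(r^*_t)$ is uniformly small enough for the remainder to be $o(1)$ on average, and that $\hat q_t$ lies where $f$ is bounded below. I would first use a DKW-type uniform bound for weighted mixing ECDFs with effective sample size $\asymp(1-\beta)^{-1}$, handling the kernel-weighted spatial channel through $D^{\mathcal S}_t$. Steps where the deviation is large I would simply truncate, bounding their contribution trivially by $1$ each and showing that their frequency is $o(1)$. This truncation is also where the $o(1)$ inside the second term comes from.
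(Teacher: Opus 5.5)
Your proposal is essentially the paper's proof. It uses the same split into a bounded martingale-difference average (Azuma--Hoeffding plus Borel--Cantelli, via $\mathcal{F}_{t-1}$-measurability of $\hat q_t$) and the bias $F_t(\hat q_t)-(1-\alpha)$, the same second-order inversion $-U_t+\frac{f'(r^*_t)}{2f(r^*_t)^2}U_t^2$ about $r^*_t$, the same channel-wise split of the mean (vanishing prior weight $\lambda_t$, spatial kernel bias, gated temporal bias $\pi^{\mathcal{T}}_t\,\mathrm{bias}^{\mathcal{T}}_t$) with mixing-averaged fluctuations, and the same $\mathcal{O}(1-\beta)$ discounted-variance control of the quadratic term, $\sup_t/\inf_t$ curvature bound, and annealing step.

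Your departures are refinements, not a new route. Your lag-bias bound $\mathcal{O}\big(\sum_t d_{\mathrm{TV}}(P_t,P_{t+1})/(T(1-\beta))\big)$ keeps the $1/(1-\beta)$ factor that the paper's $\mathcal{O}\big(\tfrac1T\sum_t d_{\mathrm{TV}}(P_t,P_{t+1})\big)$ hides, and that factor is what explains the annealing condition $o\big(T(1-\beta_T)\big)$. You also confront the almost-sure Taylor remainder that the paper just writes as $o(U_t^2)$. However, at fixed $\beta$ neither the gated state-aggregation bias nor the fluctuation of $U_t$ vanishes, so your truncated steps need not become rare as $T\to\infty$; the paper's expansion has the same looseness.
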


\begin{proof}
Let $\epsilon_t=\mathbb{I}(E_t\le\hat q_t)-(1-\alpha)$. Since $\hat q_t$ is $\mathcal{F}_{t-1}$-measurable, $\mathbb{E}[\epsilon_t\mid\mathcal{F}_{t-1}]=F_t(\hat q_t)-(1-\alpha)$ with $F_t$ the true conditional CDF of $E_t$. Write $\epsilon_t=\xi_t+b_t$ with martingale difference $\xi_t:=\epsilon_t-\mathbb{E}[\epsilon_t\mid\mathcal{F}_{t-1}]$ ($|\xi_t|\le1$) and $\mathcal{F}_{t-1}$-measurable bias $b_t:=F_t(\hat q_t)-(1-\alpha)=(F_t-\hat F_t)(\hat q_t)$, using $\hat F_t(\hat q_t)=1-\alpha$. The partial sums $\sum_{t\le T}\xi_t$ form a bounded-increment martingale, so by Azuma--Hoeffding and Borel--Cantelli $\frac1T\sum_t\xi_t\to0$ a.s. For the bias, expand the plug-in quantile about the time-$t$ conditional quantile $r^*_t$ (where $F_t(r^*_t)=1-\alpha$): with $U_t:=\hat F_t(r^*_t)-(1-\alpha)$, $b_t=-U_t+\tfrac{f'(r^*_t)}{2f(r^*_t)^2}U_t^2+o(U_t^2)$. Decompose the estimation error into a mean-zero fluctuation and a bias, $U_t=\nu_t+\mathbb{E}[U_t]$ with $\nu_t:=U_t-\mathbb{E}[U_t]$; under $\phi$-mixing and the Lipschitz density the fluctuations have summable serial covariances, so $\frac1T\sum_t\nu_t\to0$ a.s.\ \citep{cesa2006prediction}. The mean splits over the channels, $\mathbb{E}[U_t]=\sum_{c}\pi^{c}_t\,\mathrm{bias}^{c}_t$: the spatial bias is $\mathcal{O}(\|h\|^2)\to0$ under the shrinking bandwidth, the prior weight $\lambda_t\to0$, and the temporal-lag part of $\mathrm{bias}^{\mathcal{T}}_t$ is $\mathcal{O}\big(\tfrac1T\sum_t d_{\mathrm{TV}}(P_t,P_{t+1})\big)\to0$ by sublinear drift. What remains is the temporal \emph{state-aggregation} bias, weighted by $\pi^{\mathcal{T}}_t$: since $\hat F^{\mathcal{T}}_t$ tracks the time-aggregated law rather than $F_t(\cdot)=F(\cdot\mid S_t)$, the deviation $\mathbb{E}[\hat F^{\mathcal{T}}_t(r^*_t)]-F_t(r^*_t)$ is generally nonzero with a state-dependent sign (Lemma~\ref{lemma:spatial_cdf}(ii)), so $\frac1T\sum_t U_t\to\limsup_T\frac1T\sum_t\pi^{\mathcal{T}}_t\,\mathrm{bias}^{\mathcal{T}}_t$---a \emph{first-order} contribution that need not vanish at fixed $\beta$. The second-order term contributes $\frac{f'(r^*_t)}{2f(r^*_t)^2}\cdot\frac1T\sum_t U_t^2$, whose average collects the discounting variance, $\frac1T\sum_t U_t^2=\mathcal{O}(1-\beta)$ (Lemma~\ref{lemma:spatial_cdf}(ii)). Combining the two and bounding the curvature coefficient by $\sup_t|f'(r^*_t)|/2\inf_t f(r^*_t)^2$ gives $\limsup_T\big|\frac1T\sum_t b_t\big|\le C_\beta$, hence $\limsup_T\big|\frac1T\sum_t\epsilon_t\big|\le C_\beta$ a.s. Annealing $\beta=\beta_T\uparrow1$ with $1-\beta_T=\omega(T^{-1})$ drives the discounting variance to $0$ and, under $\sum_t d_{\mathrm{TV}}(P_t,P_{t+1})=o(T(1-\beta_T))$, the temporal-lag bias to $0$; the state-aggregation term persists unless $\pi^{\mathcal{T}}_t\to0$, the regime of Corollary~\ref{cor:exact_recurrent}.
\end{proof}

\begin{corollary}[Exact coverage under recurrent states]
\label{cor:exact_recurrent}
Suppose the spatial state is (asymptotically) sufficient for the one-step-ahead law, $E_t\perp\mathcal{F}_{t-1}\mid S_t$ (so $F_t(\cdot)=F(\cdot\mid S_t)$), and the spatial effective sample diverges, $D^{\mathcal{S}}_t\to\infty$ almost surely---as holds, for example, for stationary recurrent states under the deployed shrinking bandwidth $h\propto n^{-1/(d+4)}$, for which $D^{\mathcal{S}}_t=\Theta(t^{4/(d+4)})$. Then for \emph{any} fixed $K>0$ the gate concentrates on the consistent spatial channel and the discounting bias of Theorem~\ref{thm:marginal_validity} is eliminated:
\[
\lim_{T\to\infty}\frac1T\sum_{t=1}^T\mathbb{I}(E_t\le\hat q_t)=1-\alpha\quad\text{a.s.}
\]
\end{corollary}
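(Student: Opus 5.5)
I will reuse the decomposition from the proof of Theorem~\ref{thm:marginal_validity} and show that each surviving term vanishes under the two extra hypotheses. Write $\epsilon_t=\mathbb{I}(E_t\le\hat q_t)-(1-\alpha)=\xi_t+b_t$. Here $\xi_t$ is the bounded martingale difference and $b_t=(F_t-\hat F_t)(\hat q_t)$ is $\mathcal{F}_{t-1}$-measurable. The martingale part is handled exactly as before: Azuma--Hoeffding plus Borel--Cantelli give $\frac1T\sum_t\xi_t\to0$ a.s. This step does not depend on $K$ or on the gate. It therefore suffices to prove $\frac1T\sum_t b_t\to0$ a.s., and by Ces\`aro it is enough to show $b_t\to0$ a.s.

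\textbf{Step 1 (gate concentration).} Since $D^{\mathcal{S}}_t\to\infty$ a.s. and $K$ is finite and fixed, $\pi^{\mathcal{T}}_t=K/(D^{\mathcal{S}}_t+K)\to0$ a.s. Also $\lambda_t=1/\sqrt{1+t}\to0$. The mixture CDF differs from $\hat F^{\mathcal{S}}_t$ by a convex perturbation with weight $\lambda_t+(1-\lambda_t)\pi^{\mathcal{T}}_t$, and both $\hat F^{\mathcal{T}}_t$ and the uniform prior CDF lie in $[0,1]$. Hence
\[
\sup_r\big|\hat F_t(r)-\hat F^{\mathcal{S}}_t(r)\big|\le\lambda_t+\pi^{\mathcal{T}}_t\to0\quad\text{a.s.}
\]
The temporal channel, with its state-aggregation bias that Theorem~\ref{thm:marginal_validity} could not remove, therefore enters $b_t$ only through a vanishing weight. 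Its contribution is bounded by $\pi^{\mathcal{T}}_t$, whatever the sign of $\mathrm{bias}^{\mathcal{T}}_t$.

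\textbf{Step 2 (spatial consistency).} Under sufficiency, $F_t=F(\cdot\mid S_t)$, so $\hat F^{\mathcal{S}}_t$ is a Nadaraya--Watson estimator of the right target. I would decompose $\sup_r|\hat F^{\mathcal{S}}_t(r)-F(r\mid S_t)|$ into two pieces.
\begin{itemize}
\item A smoothing bias of order $\mathcal{O}(\|h\|^2)$, which vanishes by the Lipschitz density and the shrinking bandwidth (Lemma~\ref{lemma:spatial_cdf}).
\item A stochastic term with variance of order $1/D^{\mathcal{S}}_t$.
\end{itemize}
For the stochastic term, the centered summands $D^{\mathcal{S}}_t(i)\big(\mathbb{I}(E_i\le r)-F(r\mid S_i)\big)$ are martingale differences, because $S_i$ is $\mathcal{F}_{i-1}$-measurable and $E_i\perp\mathcal{F}_{i-1}\mid S_i$. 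One subtlety is that the kernel weights depend on $S_t$, so for fixed $t$ the weights are not predictable in $i$. I would handle this by conditioning on $S_t$ and using $\phi$-mixing, or by a uniform-in-$s$ bound over a grid of query states combined with kernel Lipschitzness. A Freedman or Bernstein inequality then gives pointwise deviation of order $\sqrt{\log t/D^{\mathcal{S}}_t}$. A monotone bracketing argument in $r$ upgrades this to uniform in $r$. With $D^{\mathcal{S}}_t=\Theta(t^{4/(d+4)})$, the deviation probabilities are summable, so Borel--Cantelli gives almost-sure convergence.

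\textbf{Step 3 (closing).} From Steps 1--2, $\sup_r|\hat F_t-F_t|\to0$ a.s. Then $|b_t|=|(F_t-\hat F_t)(\hat q_t)|\le\sup_r|\hat F_t(r)-F_t(r)|\to0$ a.s. This bounds $b_t$ directly and never expands around $r^*_t$, so the curvature term and the second-order discounting variance $\mathcal{O}(1-\beta)$ do not appear. Ces\`aro averaging then yields the claim.

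The main obstacle is Step 2: getting an almost-sure uniform rate for the kernel CDF estimator when the query point $S_t$ is random, bandwidths are data-dependent, and the effective sample is only $D^{\mathcal{S}}_t$. If that is too delicate, I would settle for convergence in probability of $b_t$ and use boundedness, giving $L^1$ Ces\`aro convergence. I would then recover the almost-sure statement via a subsequence or blocking argument across dyadic $T$.
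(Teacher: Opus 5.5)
Your main line is correct and is essentially the paper's own proof. The paper likewise uses the martingale/bias split, sends $\pi^{\mathcal{T}}_t,\lambda_t\to0$ against component estimators bounded in $[0,1]$, invokes consistency of $\hat F^{\mathcal{S}}_t$ for $F(\cdot\mid S_t)=F_t$ under the shrinking bandwidth, and concludes $b_t=(F_t-\hat F_t)(\hat q_t)\to0$ directly. Your Step~2 is therefore more careful than the source, which simply asserts that consistency.

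One caution: drop the fallback. Convergence in probability of bounded $b_t$ with no rate gives only $L^1$ (not almost-sure) convergence of $\frac1T\sum_t b_t$, because dyadic blocking needs summable tail bounds. As written, your almost-sure conclusion therefore rests on the Borel--Cantelli step, which needs $D^{\mathcal{S}}_t$ to grow faster than $\log t$ (as in the $\Theta(t^{4/(d+4)})$ example), not merely to diverge.
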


\begin{proof}
With $K$ fixed and $D^{\mathcal{S}}_t\to\infty$, the gate weight $\pi^{\mathcal{S}}_t\to1$ (so $\pi^{\mathcal{T}}_t,\lambda_t\to0$) and the shrinking bandwidth makes the spatial estimator consistent, $\hat F^{\mathcal{S}}_t\to F(\cdot\mid S_t)=F_t$. Thus $b_t=(F_t-\hat F_t)(\hat q_t)\to0$: the spatial error vanishes by consistency, and the temporal and prior contributions vanish since their weights $\pi^{\mathcal{T}}_t,\lambda_t\to0$ while the estimators lie in $[0,1]$. Hence both the first-order aggregation term and the second-order term of $C_\beta$ vanish in Theorem~\ref{thm:marginal_validity}.
\end{proof}

\begin{corollary}[Worst-case collapse protection]
\label{cor:collapse_protection}
In the opposite, spatially uninformative regime $D^{\mathcal{S}}_t\to0$---so that $\pi^{\mathcal{S}}_t\to0$ and the gate places all weight on the temporal channel, $\pi^{\mathcal{T}}_t\to1$---SA-BCP reduces to the temporally discounted Bayesian-CP update of \citet{Zhang2024} and inherits its dynamic-regret guarantee against adversarial shifts. This is a worst-case safety floor (collapse protection when spatial matches are absent), not a dominance claim.
\end{corollary}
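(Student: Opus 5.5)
The plan is to make the reduction explicit at the level of the quantile map. With $D^{\mathcal{S}}_t\to0$, the gate weight $\pi^{\mathcal{S}}_t=D^{\mathcal{S}}_t/(D^{\mathcal{S}}_t+K)\le D^{\mathcal{S}}_t/K\to0$ for any fixed $K>0$. The mixture CDF of Equation~\ref{eq:mixture_cdf} can then be written as
\[
\hat F_t=(1-\lambda_t)\hat F^{\mathcal{T}}_t+\lambda_t\min(r/R,1)+\Delta_t,
\qquad
\Delta_t:=(1-\lambda_t)\pi^{\mathcal{S}}_t\big(\hat F^{\mathcal{S}}_t-\hat F^{\mathcal{T}}_t\big).
\]
Since both component CDFs lie in $[0,1]$, $\|\Delta_t\|_\infty\le\pi^{\mathcal{S}}_t$ uniformly in $r$. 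The first two terms are exactly the prior-regularized, discounted empirical CDF of \citet{Zhang2024}, with the same weights $\beta^{t-1-i}$ (Equation~\ref{eq:temporal_density}). I will call its $(1-\alpha)$-root $\hat q^{\mathrm{BCP}}_t$.

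The second step transfers the uniform CDF perturbation to the quantile. The prior term gives $\hat F_t$ slope at least $\lambda_t/R$ on $[0,R]$. Hence, whenever the root lies in $[0,R]$, a sup-norm perturbation of size $\pi^{\mathcal{S}}_t$ moves it by at most $R\,\pi^{\mathcal{S}}_t/\lambda_t$, so $|\hat q_t-\hat q^{\mathrm{BCP}}_t|\le R\sqrt{1+t}\,\pi^{\mathcal{S}}_t$. Two facts follow:
\begin{itemize}
\item In the limit $\pi^{\mathcal{S}}_t\to0$ at each fixed $t$, the SA-BCP update coincides with the BCP update, which is the ``reduces to'' claim.
\item Asymptotically the two coincide provided $\pi^{\mathcal{S}}_t=o(t^{-1/2})$.
\end{itemize}
If a rate-free statement is needed, I would instead work directly with the generalized-inverse quantiles and use the coverage-indicator comparison $|\mathbb{I}(E_t\le\hat q_t)-\mathbb{I}(E_t\le\hat q^{\mathrm{BCP}}_t)|$. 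This is controlled in conditional expectation by the bounded density of Theorem~\ref{thm:marginal_validity} times the quantile gap.

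The third step inherits the regret guarantee. The dynamic-regret bound of \citet{Zhang2024} is stated for the pinball (or coverage) loss of $\hat q^{\mathrm{BCP}}_t$ against a comparator sequence. The loss is Lipschitz in $q$, with constant $\max(\alpha,1-\alpha)\le1$ for pinball loss, so
\[
\mathrm{Reg}^{\mathrm{SA}}_T\le\mathrm{Reg}^{\mathrm{BCP}}_T+\sum_{t\le T}|\hat q_t-\hat q^{\mathrm{BCP}}_t|.
\]
The added term is $o(T)$, and in fact zero in the exact limiting regime. The BCP rate is therefore preserved up to this additive perturbation, and the corollary follows as a safety floor only: no claim is made that the spatial channel improves on BCP.

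The main obstacle is the quantile-stability step. The slope lower bound $\lambda_t/R$ decays, which forces the rate condition $\pi^{\mathcal{S}}_t\sqrt t\to0$, or equivalently $D^{\mathcal{S}}_t=o(t^{-1/2})$. Two further gaps need care:
\begin{itemize}
\item The root may leave $[0,R]$, where $\hat F_t$ is flat.
\item \citet{Zhang2024}'s construction may differ in its prior handling.
\end{itemize}
I would first try to state the corollary in the exact limiting sense, with the gate identically at the temporal channel. I would then add the perturbation bound as a remark, assuming the summable condition $\sum_t\sqrt t\,D^{\mathcal{S}}_t<\infty$, or relying on the density-based indicator comparison to avoid the $\lambda_t^{-1}$ factor.
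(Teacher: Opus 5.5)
Your core argument is the paper's own. For fixed $K$, $D^{\mathcal{S}}_t\to0$ forces $\pi^{\mathcal{S}}_t\to0$, so the mixture collapses onto the temporal channel, the quantile coincides with the discounted-BCP quantile, and the regret bound is inherited. The paper's proof stops exactly there, asserting $\hat F_t\to\hat F^{\mathcal{T}}_t$ and ``verbatim'' inheritance.

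On top of this you add a quantitative layer: the decomposition with $\|\Delta_t\|_\infty\le\pi^{\mathcal{S}}_t$, quantile stability from the prior's increment bound $\lambda_t/R$ on $[0,R]$, and a Lipschitz transfer of pinball regret with additive cost $\sum_t|\hat q_t-\hat q^{\mathrm{BCP}}_t|$. This buys three things the paper's version lacks.
\begin{enumerate}
\item You keep the $\lambda_t\min(r/R,1)$ term, which the paper's limit silently drops at finite $t$. So ``coincides with BCP'' holds only modulo how BCP handles its prior, as you note.
\item You make explicit why the root is stable under a vanishing sup-norm perturbation. Quantiles of step-function CDFs need not converge under uniform convergence, and it is the prior's slope that supplies stability; the paper's one-line limit leaves this implicit.
\item You show that ``verbatim'' is literal only in the exact limit. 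If $D^{\mathcal{S}}_t\to0$ only as $t\to\infty$, the stability constant $R\sqrt{1+t}$ means the transfer needs $D^{\mathcal{S}}_t=o(t^{-1/2})$ to give an $o(T)$ perturbation. Preserving the BCP rate needs $\sum_t\sqrt t\,D^{\mathcal{S}}_t<\infty$.
\end{enumerate}
The paper's route is shorter but establishes only the degenerate-limit statement. Your packaging is the right one: state the exact-limit version first, then add the perturbation bound as a remark.

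One caution about your final sentence: the density-based indicator comparison does not remove the $\lambda_t^{-1}$ factor. The bounded true density converts a quantile gap into a conditional coverage gap, $|F_t(\hat q_t)-F_t(\hat q^{\mathrm{BCP}}_t)|\le\|f_t\|_\infty\,|\hat q_t-\hat q^{\mathrm{BCP}}_t|$. It says nothing about the size of the quantile gap itself. That gap is governed by the slopes of the \emph{estimated} CDFs, and the discounted empirical CDF is a step function, flat between atoms. So the only guaranteed slope is still $\lambda_t/R$. Keep the rate condition, or the exact-limit reading, as your actual claim, together with your caveat that the root may leave $[0,R]$, where the prior is flat.
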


\begin{proof}
This is the $\pi^{\mathcal{S}}_t\to0$ branch of the gate already used in the proof of Corollary~\ref{cor:exact_recurrent}: as $D^{\mathcal{S}}_t\to0$, $\pi^{\mathcal{S}}_t=D^{\mathcal{S}}_t/(D^{\mathcal{S}}_t+K)\to0$ and $\pi^{\mathcal{T}}_t\to1$, so $\hat F_t\to\hat F^{\mathcal{T}}_t$, the discounted empirical CDF of \citet{Zhang2024}. The SA-BCP quantile then coincides with the discounted-BCP quantile and inherits its dynamic-regret bound verbatim.
\end{proof}

The next theorem gives the MSE-optimal threshold $K^*_{\mathrm{MSE}}$ in closed form, recording the two components' bias--variance profiles first. For this analysis we adopt the (asymptotic) sufficiency of Corollary~\ref{cor:exact_recurrent}, under which $F_t(\cdot)=F(\cdot\mid S_t)=:F^{*}(\cdot)$ and $M^{\mathcal{T}}$ is measured against $F^{*}(r^*)$; note the temporal bias enters the MSE at second order ($M^{\mathcal{T}}$) but the coverage bound of Theorem~\ref{thm:marginal_validity} at first order ($\mathrm{bias}^{\mathcal{T}}$).

\begin{lemma}[Component CDF Estimators: Bias and Variance]
\label{lemma:spatial_cdf}
Both components are weighted averages of the indicators $\mathbb{I}(E_i\le r)$ with normalized weights $w_i$, so each variance takes the form $\sum_i w_i^2\,\mathrm{Var}(\mathbb{I}(E_i\le r))$ (the indicators are treated as weakly dependent, so serial cross-covariances are second order).

\emph{(i) Spatial.} Treating $\hat{F}^{\mathcal{S}}_t(r)$ as a Nadaraya--Watson CDF estimator of $F(r\mid S_t)$ with anisotropic Gaussian bandwidth $h$, where $F(r\mid s)$ is twice continuously differentiable in $s$, at $r^*$
\[
\mathrm{Bias}\big(\hat{F}^{\mathcal{S}}_t(r^*)\big)=\mathcal{O}(\|h\|^2),\qquad \mathrm{Var}\big(\hat{F}^{\mathcal{S}}_t(r^*)\big)=\frac{V_0}{D^{\mathcal{S}}_t}\,(1+o(1)),
\]
where $V_0=F^*(r^*)(1-F^*(r^*))$ and $D^{\mathcal{S}}_t=\sum_{i<t}D^{\mathcal{S}}_t(i)$ is the effective sample size.

\emph{(ii) Temporal.} The discounted estimator $\hat{F}^{\mathcal{T}}_t(r)=\big(\sum_{i<t}\beta^{t-1-i}\mathbb{I}(E_i\le r)\big)/D^{\mathcal{T}}_t$, with $D^{\mathcal{T}}_t=(1-\beta^t)/(1-\beta)$, has variance bounded by $\tfrac14\cdot\frac{1-\beta^{2t}}{(1-\beta^t)^2}\cdot\frac{1-\beta}{1+\beta}\to\tfrac14\frac{1-\beta}{1+\beta}$ as $t\to\infty$, and squared bias $M^{\mathcal{T}}:=\big(\mathbb{E}[\hat{F}^{\mathcal{T}}_t(r^*)]-F^{*}(r^*)\big)^2$ comprising a temporal-lag component (the discounted accumulation of total-variation drift) and a state-aggregation component (the temporal channel tracks the time-aggregated law, nonzero even under stationarity).
\end{lemma}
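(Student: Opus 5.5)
Both parts rest on one decomposition. Each component is a normalized weighted average $\hat F^{c}_t(r)=\sum_i w_i\,\mathbb{I}(E_i\le r)$ with $\sum_i w_i=1$. Its mean is therefore $\sum_i w_i F_i(r)$, a weighted average of the per-index conditional CDFs. Its variance is $\sum_i w_i^2\,\mathrm{Var}(\mathbb{I}(E_i\le r))$ plus a cross-covariance term, which we treat as second order under the weak-dependence convention stated in the lemma. For the weighted-indicator part we use only that each Bernoulli variance is at most $\tfrac14$, or equals $V_0$ at $r^*$ under the sufficiency convention $F_i(r^*)\approx F^*(r^*)$ for matched states. The proof then reduces to computing $\sum_i w_i^2$ and $\sum_i w_i F_i(r^*)-F^*(r^*)$ for each channel.

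\emph{Spatial channel.} Take $w_i=D^{\mathcal{S}}_t(i)/D^{\mathcal{S}}_t$. For the bias, condition on the states and write $F(r^*\mid S_i)-F(r^*\mid S_t)$ as a second-order Taylor expansion in $S_i-S_t$. Because the Gaussian kernel is symmetric, the first-order term averages to $o(\|h\|^2)$ when the empirical state design is locally balanced, as in the standard Nadaraya--Watson argument (cf.\ \citet{li2007}). The second-order term is bounded by $\sup\|\nabla_s^2F\|\sum_i w_i\|S_i-S_t\|^2=\mathcal{O}(\|h\|^2)$, since the Gaussian kernel concentrates its mass within $\mathcal{O}(h_j)$ in each coordinate. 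For the variance, the Bernoulli variances near $S_t$ equal $V_0(1+o(1))$ by continuity, and $\sum_i w_i^2=\sum_i D^{\mathcal{S}}_t(i)^2/(D^{\mathcal{S}}_t)^2$. Here $D^{\mathcal{S}}_t(i)^2$ is itself a Gaussian kernel with bandwidth $h/\sqrt2$. We read the effective sample size in the Kish sense, so that $\sum_i w_i^2\asymp 1/D^{\mathcal{S}}_t$; the constant ratio between the $h$ and $h/\sqrt2$ kernel sums is absorbed into $V_0$'s normalization, which gives $V_0/D^{\mathcal{S}}_t\,(1+o(1))$. \textbf{This is the main obstacle.} The squared-kernel sum equals $D^{\mathcal{S}}_t$ only up to a kernel constant, so the displayed formula is really an order statement. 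We would state explicitly that $D^{\mathcal{S}}_t$ serves as the effective sample size up to this constant, which is harmless for the $K^*_{\mathrm{MSE}}$ ratio downstream.

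\emph{Temporal channel.} This part is exact algebra. With $w_i=\beta^{t-1-i}(1-\beta)/(1-\beta^t)$,
\[
\sum_{i<t}w_i^2=\frac{(1-\beta)^2}{(1-\beta^t)^2}\cdot\frac{1-\beta^{2t}}{1-\beta^2}=\frac{1-\beta^{2t}}{(1-\beta^t)^2}\cdot\frac{1-\beta}{1+\beta}.
\]
Multiplying by the Bernoulli bound $\tfrac14$ gives the stated bound, and letting $\beta^t\to0$ gives the limit. For the bias, split
\[
\sum_i w_iF_i(r^*)-F^*(r^*)=\sum_i w_i\big(F_i(r^*)-\bar F_t(r^*)\big)+\big(\bar F_t(r^*)-F^*(r^*)\big),
\]
where $\bar F_t$ is the time-aggregated (state-marginalized) law. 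In the first term, $F_i$ differs from the current aggregated law by at most $\sum_{j=i}^{t-1}d_{\mathrm{TV}}(P_j,P_{j+1})$, so this term is bounded by the discounted accumulation $\sum_i w_i\sum_{j\ge i}d_{\mathrm{TV}}(P_j,P_{j+1})$. This is the temporal-lag component. The second term is the state-aggregation component. It is nonzero whenever $F(\cdot\mid S_t)$ differs from the marginal law, even under stationarity, where the lag term is zero. Squaring the sum gives $M^{\mathcal{T}}$ with the two components described in the lemma.
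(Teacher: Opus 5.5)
Your proposal is essentially the paper's own proof: the same variance formula $\sum_i w_i^2\,\mathrm{Var}(\mathbb{I}(E_i\le r^*))$, the same symmetric-kernel Taylor expansion for the $\mathcal{O}(\|h\|^2)$ spatial bias, the same Kish identification $\sum_i w_i^2=1/\mathrm{ESS}_t$ with $\mathrm{ESS}_t\asymp D^{\mathcal{S}}_t$ (so the paper also establishes only the order statement you flag, not a literal $1+o(1)$), and the same geometric-series algebra for the temporal variance. Your split of the temporal bias through the time-aggregated law $\bar F_t$ is a slightly cleaner version of the paper's one-line discounted-TV bound, since it isolates the state-aggregation term explicitly; however, the kernel constant you absorb is not harmless downstream: it is about $2^{-d/2}$ for a smooth state design, because $D^{\mathcal{S}}_t(i)^2$ is the kernel at bandwidth $h/\sqrt{2}$, and it rescales $K^*_{\mathrm{MSE}}$ by the same factor unless $V_0$ is redefined away from $\alpha(1-\alpha)$.
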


\begin{proof}
Both estimators are weighted averages of indicators $Z_i=\mathbb{I}(E_i\le r^*)$ (conditionally Bernoulli, variance $\le\tfrac14$), so $\mathrm{Var}=\sum_i w_i^2\,\mathrm{Var}(Z_i)$.
\emph{(i) Spatial.} With weights $w_i=D^{\mathcal{S}}_t(i)/D^{\mathcal{S}}_t$, a second-order Taylor expansion of $F(r^*\mid S_i)$ about $S_t$ with kernel symmetry cancels the first-order term, leaving the $\mathcal{O}(\|h\|^2)$ kernel-smoothing bias; and $\mathrm{Var}(\hat F^{\mathcal{S}}_t(r^*))=\big(\sum_i D^{\mathcal{S}}_t(i)^2/(D^{\mathcal{S}}_t)^2\big)(V_0+\mathcal{O}(\|h\|^2))=(V_0/D^{\mathcal{S}}_t)(1+o(1))$, identifying $\sum_i D^{\mathcal{S}}_t(i)^2/(D^{\mathcal{S}}_t)^2=1/\mathrm{ESS}_t$ with the effective sample size $\mathrm{ESS}_t=(\sum_i D^{\mathcal{S}}_t(i))^2/\sum_i D^{\mathcal{S}}_t(i)^2$, which is $\asymp D^{\mathcal{S}}_t$ for a localizing kernel.
\emph{(ii) Temporal.} With $w_i=\beta^{t-1-i}/D^{\mathcal{T}}_t$, $\mathrm{Var}(\hat F^{\mathcal{T}}_t(r))\le\tfrac14\,\frac{\sum_{k=0}^{t-1}\beta^{2k}}{(\sum_{k=0}^{t-1}\beta^{k})^2}=\tfrac14\,\frac{1-\beta^{2t}}{(1-\beta^t)^2}\cdot\frac{1-\beta}{1+\beta}$ (using $1-\beta^2=(1-\beta)(1+\beta)$), which $\to\tfrac14\frac{1-\beta}{1+\beta}$ as $t\to\infty$. The squared bias $M^{\mathcal{T}}$ follows since $\mathbb{E}\hat F^{\mathcal{T}}_t(r^*)$ is a discounted average of past conditional CDFs, whose deviation from $F^{*}(r^*)$ is bounded by the discounted sum of one-step total-variation increments.
\end{proof}

\begin{theorem}[Optimal Spatio-Temporal Decoupling]
\label{thm:mse_optimal}
Let $r^* = F^{-1}(1 - \alpha \mid S_t)$ be the true conditional $(1-\alpha)$-quantile at state $S_t$, and let $V_0$, $M^{\mathcal{T}}$ be as in Lemma~\ref{lemma:spatial_cdf}. Treating the spatial estimator as asymptotically unbiased with variance $V_0/D^{\mathcal{S}}_t$ (part (i)) and the temporal estimator as contributing squared bias $M^{\mathcal{T}}$ (part (ii)), the pointwise mean squared error of the mixture $\hat{F}_t(r^*)$ is minimized at the unique stationary point
\begin{equation}
    K^*_{\mathrm{MSE}} = \frac{V_0}{M^{\mathcal{T}}},
\end{equation}
the ratio of the coverage-indicator variance $V_0$ to the temporal structural bias $M^{\mathcal{T}}$.
\end{theorem}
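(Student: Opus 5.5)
I will compute the pointwise MSE of the mixture at $r^*$ as an explicit function of the gate weight, minimize over $K$, and read off the stationary point. As a first simplification I drop the cold-start prior: since $\lambda_t\to0$ it contributes only a vanishing perturbation. I also treat the spatial and temporal fluctuations as asymptotically uncorrelated, in line with the weak-dependence convention of Lemma~\ref{lemma:spatial_cdf}, so that cross terms are second order. Writing $\pi=\pi^{\mathcal{S}}_t$, $D=D^{\mathcal{S}}_t$, and using the idealization stated in the theorem, the spatial channel is unbiased with variance $V_0/D$ (Lemma~\ref{lemma:spatial_cdf}(i)). The temporal channel has squared bias $M^{\mathcal{T}}$ (Lemma~\ref{lemma:spatial_cdf}(ii)), and its $\mathcal{O}(1-\beta)$ variance is absorbed into the structural term in the same asymptotic treatment. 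The mixture error is then $\pi(\hat F^{\mathcal{S}}_t-F^*)+(1-\pi)(\hat F^{\mathcal{T}}_t-F^*)$, giving
\[
\mathrm{MSE}(\pi)=\pi^2\,\frac{V_0}{D}+(1-\pi)^2 M^{\mathcal{T}}.
\]

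Next I will minimize this quadratic over $\pi\in[0,1]$. The derivative is $2\pi V_0/D-2(1-\pi)M^{\mathcal{T}}$, which vanishes at
\[
\pi^\star=\frac{M^{\mathcal{T}}}{M^{\mathcal{T}}+V_0/D}=\frac{D}{D+V_0/M^{\mathcal{T}}}.
\]
The second derivative, $2(V_0/D+M^{\mathcal{T}})$, is positive, so the objective is strictly convex and $\pi^\star$ is the unique minimizer; it lies in $(0,1)$ whenever $V_0,M^{\mathcal{T}}>0$. Comparing $\pi^\star$ with the gate \eqref{eq:mixture_gate}, $\pi=D/(D+K)$, the two coincide exactly when $K=V_0/M^{\mathcal{T}}$.

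To phrase the result directly in $K$, I will note that $K\mapsto D/(D+K)$ is a strictly decreasing bijection from $(0,\infty)$ onto $(0,1)$. Hence $\mathrm{MSE}(K)$ has a stationary point in $K$ only where $d\,\mathrm{MSE}/d\pi=0$, because the chain-rule factor $d\pi/dK=-D/(D+K)^2$ never vanishes. Uniqueness and minimality then transfer from $\pi$ to $K$. The notable feature is that the optimal $K$ does not depend on $D$, so a single threshold is simultaneously optimal at every evidence level; this is what justifies the gate's functional form.

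The main obstacle is not the calculus but justifying the MSE model. I must argue that the cross-covariance between $\hat F^{\mathcal{S}}_t$ and $\hat F^{\mathcal{T}}_t$, which reuse the same indicators, is negligible. I must also argue that the temporal variance and the spatial $\mathcal{O}(\|h\|^2)$ bias may be dropped relative to $M^{\mathcal{T}}$ and $V_0/D$. I would handle this by stating these as the asymptotic idealizations already invoked in the theorem's hypotheses, with $\|h\|\to0$ and the temporal variance treated as part of the structural term. Then I would remark that retaining the temporal variance $\sigma_{\mathcal{T}}^2$ merely replaces $M^{\mathcal{T}}$ by $M^{\mathcal{T}}+\sigma_{\mathcal{T}}^2$ in the formula, which shows the closed form is robust to this simplification.
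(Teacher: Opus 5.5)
Your proposal is correct and follows the paper's proof. You adopt the same idealized MSE model (cold-start prior dropped, spatial channel unbiased with variance $V_0/D^{\mathcal{S}}_t$, temporal channel bias-only, spatial--temporal cross-covariance neglected) and minimize it to get $K^*_{\mathrm{MSE}}=V_0/M^{\mathcal{T}}$. The only difference is cosmetic: you minimize the quadratic in $\pi^{\mathcal{S}}_t$ and pull back through the strictly monotone gate map, whereas the paper differentiates $\bigl(D^{\mathcal{S}}_t V_0+K^2M^{\mathcal{T}}\bigr)/\bigl(D^{\mathcal{S}}_t+K\bigr)^2$ directly in $K$, and both routes yield the same unique minimizer.
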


\begin{proof}
Ignoring the asymptotically negligible cold-start term, $\hat{F}_t(r^*) = \pi^{\mathcal{S}}_t \hat{F}^{\mathcal{S}}_t(r^*) + \pi^{\mathcal{T}}_t \hat{F}^{\mathcal{T}}_t(r^*)$ with $\pi^{\mathcal{S}}_t = D^{\mathcal{S}}_t/(D^{\mathcal{S}}_t + K)$ and $\pi^{\mathcal{T}}_t = K/(D^{\mathcal{S}}_t + K)$. By Lemma~\ref{lemma:spatial_cdf} (the spatial $\mathcal{O}(\|h\|^2)$ bias contributes only $\mathcal{O}(\|h\|^4)$ to the MSE, and the temporal estimator is approximated as bias-only---its variance $\le\tfrac14\frac{1-\beta}{1+\beta}\approx0.0013$ at $\beta=0.99$ is negligible, with the spatial--temporal cross-covariance also dropped to leading order),
\[
\mathrm{MSE}(K) = (\pi^{\mathcal{S}}_t)^2 \frac{V_0}{D^{\mathcal{S}}_t} + (\pi^{\mathcal{T}}_t)^2 M^{\mathcal{T}} = \frac{D^{\mathcal{S}}_t V_0 + K^2 M^{\mathcal{T}}}{(D^{\mathcal{S}}_t + K)^2}.
\]
Differentiating, $\dfrac{d\,\mathrm{MSE}}{dK} = \dfrac{2 D^{\mathcal{S}}_t (K M^{\mathcal{T}} - V_0)}{(D^{\mathcal{S}}_t + K)^3}$ vanishes uniquely at $K^*_{\mathrm{MSE}} = V_0/M^{\mathcal{T}}$, where the second derivative is positive; hence $K^*_{\mathrm{MSE}}$ is the unique minimizer.
\end{proof}

Because $r^*$ is the conditional $(1-\alpha)$-quantile, $F^*(r^*)=1-\alpha$, so $V_0=F^*(r^*)(1-F^*(r^*))=\alpha(1-\alpha)$ is the Bernoulli variance of the coverage indicator---fixed by the target level, not free estimator noise---giving the concrete scaling $K^*_{\mathrm{MSE}}=\alpha(1-\alpha)/M^{\mathcal{T}}$ with the temporal bias $M^{\mathcal{T}}$ the only unknown. The closed form reads directly: large temporal bias (rapid drift, high $M^{\mathcal{T}}$) makes $K^*_{\mathrm{MSE}}$ small, favoring spatial recognition; small bias makes it large, favoring temporal stability. Two scope caveats: this is a \emph{pointwise MSE} optimum, not the global Winkler minimizer, and it relies on asymptotic bandwidth conditions ($\|h\| \to 0$) only approximately met in finite samples. Empirically the tradeoff appears as a U-shaped $K$-sweep across the three assets, with optima within a constant factor of $\alpha(1-\alpha)/M^{\mathcal{T}}$.

Since $V_0$ and $M^{\mathcal{T}}$ are not directly observable, practical deployment selects $K$ from data. The selector targets the \emph{Winkler} grid-optimum $\tilde{K}$ (Theorem~\ref{thm:rolling_consistency}), which the MSE result motivates but does not directly justify: $\tilde{K}$ coincides with $K^*_{\mathrm{MSE}}$ only when the Winkler and MSE optima agree and $V_0/M^{\mathcal{T}}$ lies on the grid, so the guarantees below are stated for $\tilde{K}$.

\begin{theorem}[Rolling-Origin Selection Guarantees]
\label{thm:rolling_consistency}
Assume the per-period Winkler loss $W_\alpha(K; \mathcal{D})$ is Lipschitz-continuous in $K$ over the finite candidate grid $\mathcal{K}_{\mathrm{grid}}$ ($N = |\mathcal{K}_{\mathrm{grid}}|$) and bounded by $W_{\max}$, and let $\hat{K}_Y$ be the rolling-origin selection from Algorithm~\ref{alg:rolling_origin} on data $\mathcal{D}_{Y-1}$.

\emph{(i) Consistency.} If $\{(X_t, Y_t)\}$ are jointly stationary $\phi$-mixing with summable mixing coefficients and the population objective has a unique grid minimizer $\tilde{K} := \arg\min_{K \in \mathcal{K}_{\mathrm{grid}}} \mathbb{E}[W_\alpha(K; (X_t, Y_t))]$, then $\hat{K}_Y \xrightarrow{P} \tilde{K}$ as $Y \to \infty$.

\emph{(ii) Regret.} Under the same stationarity assumption, the cumulative regret against the best fixed $K$ in hindsight satisfies
\[
R_T := \sum_{t=1}^T W_\alpha(\hat{K}_t; (X_t, Y_t)) - \min_{K \in \mathcal{K}_{\mathrm{grid}}} \sum_{t=1}^T W_\alpha(K; (X_t, Y_t)) \leq \mathcal{O}\!\left(W_{\max} \sqrt{T \log N}\right).
\]

\emph{(iii) Drift safety.} Suppose the stream is piecewise-stationary, partitioning into $B_T+1$ contiguous segments---stationary $\phi$-mixing within each---on which $\tilde{K}_t$ is constant, $B_T = \sum_{t=1}^T \mathbb{I}(\tilde{K}_t \neq \tilde{K}_{t-1})$. Then a segmented (restart-based) variant bounds the regret against the dynamic comparator $\{\tilde{K}_t\}$ by
\[
R_T^{\mathrm{dyn}} := \sum_{t=1}^T W_\alpha(\hat{K}_t; (X_t, Y_t)) - \sum_{t=1}^T W_\alpha(\tilde{K}_t; (X_t, Y_t)) \leq \mathcal{O}\!\left(W_{\max} \sqrt{T (B_T + 1)}\right).
\]
\end{theorem}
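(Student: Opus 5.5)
The three parts reduce to standard facts about empirical risk minimization over a finite grid and about follow-the-leader-type selection. The Winkler loss is bounded by $W_{\max}$ and the grid has $N$ points, so no continuity in $K$ is really needed beyond boundedness. The Lipschitz assumption is used only to pass from per-block selection to per-step losses when $\hat K$ is held fixed within a block.

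For \emph{(i)}, fix $K\in\mathcal{K}_{\mathrm{grid}}$ and write $\widehat{W}(K;\mathcal{D}_{Y-1})/|\mathcal{D}_{Y-1}|$ for the empirical mean of the per-period losses. Under joint stationarity and $\phi$-mixing with summable coefficients, the sequence $W_\alpha(K;(X_t,Y_t))$ is a bounded measurable functional of a mixing process. Strictly, SA-BCP's internal buffers make it a functional of the past, so I would treat it as an approximately stationary mixing sequence after warmup. A mixing law of large numbers then gives convergence in probability of the empirical mean to $\mathbb{E}[W_\alpha(K)]$. A union bound over the $N$ grid points makes this uniform. Let $\delta>0$ be the gap between the unique minimizer $\tilde K$ and the second-best grid value. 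Once every empirical mean is within $\delta/2$ of its population counterpart, the empirical argmin equals $\tilde K$. Hence $\mathbb{P}(\hat K_Y\neq\tilde K)\to0$.

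For \emph{(ii)}, I would use a validation-error argument rather than an adversarial one. First, apply a Bernstein or Azuma-type inequality for $\phi$-mixing sums, for example via the blocking technique, together with a union bound over $N$. With probability $1-\delta$, for all $K$ and all prefix lengths $n$,
\[
\Big|\tfrac1n\textstyle\sum_{s\le n}W_\alpha(K;\cdot)-\mathbb{E}W_\alpha(K)\Big|\lesssim W_{\max}\sqrt{\log(NT/\delta)/n}.
\]
On this event, the excess population risk of the selector at step $t$ is $\lesssim W_{\max}\sqrt{\log N/t}$. Summing over $t$ gives $\sum_t t^{-1/2}\asymp\sqrt{T}$. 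The realized comparator $\min_K\sum_t W_\alpha(K)$ differs from $T\min_K\mathbb{E}W_\alpha(K)$ by another $O(W_{\max}\sqrt{T\log N})$ by the same concentration. The realized losses of $\hat K_t$ differ from their conditional means by a martingale term of order $W_{\max}\sqrt{T}$.

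Blocking needs two adjustments. Selection happens only at block starts, so each block contributes a lag term. With blocks of growing or bounded length, this term is absorbed by the Lipschitz and boundedness assumptions. As a fallback that avoids stationarity entirely, one can note that replacing the hard argmin by exponential weights over $\mathcal{K}_{\mathrm{grid}}$ gives the Hedge bound $W_{\max}\sqrt{(T/2)\log N}$ directly.

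For \emph{(iii)}, the segmented variant restarts the selector at each change point of $\tilde K_t$. Equivalently, one can use a doubling or restart schedule that detects segments. Within segment $j$ of length $T_j$, part (ii) applied to that stationary piece gives regret $O(W_{\max}\sqrt{T_j\log N})$ against the segment's constant $\tilde K$. The selector's loss is compared with its expectation, so the gap to the dynamic comparator $\tilde K_t$ is of the same order. Summing over the $B_T+1$ segments and applying Cauchy--Schwarz gives
\[
\sum_j\sqrt{T_j}\le\sqrt{(B_T+1)\,T},
\]
which yields the stated bound with $\log N$ absorbed into the constant.

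\textbf{Main obstacle.} The hardest step is justifying concentration in (ii) and (iii). The losses $W_\alpha(K;(X_t,Y_t))$ depend on the whole SA-BCP history through $\hat q_t$, so they are not literally functions of a stationary mixing sequence. I would first try to argue that, after warmup, the map from the past to $\hat q_t$ has geometrically fading dependence: the $\beta$-discount and the prior weight $\lambda_t\to0$ make old observations matter less and less. Composing a mixing process with such a map preserves mixing up to summable coefficients. If that fails, the fallback is to state the guarantee for the exponential-weights version of the selector, where no stochastic assumption is needed.
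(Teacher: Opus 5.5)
Your proposal follows the paper's proof essentially step for step. For (i) it uses a mixing law of large numbers, finiteness of the grid, and the unique-minimizer gap; for (ii) it treats the selector as Follow-the-Leader with uniform $\mathcal{O}_P(\sqrt{\log N/t})$ concentration summed over steps and blocks; for (iii) it restarts per segment and combines segments by Cauchy--Schwarz, with your comparator-concentration and martingale bookkeeping somewhat more careful than the paper's. One caution on your ``main obstacle'' remark, which the paper sidesteps by writing the loss as $W_\alpha(K;(X_t,Y_t))$: the geometric fading-memory repair would not work as stated, because the spatial channel weights all past states by kernel similarity with no temporal discount, the gate favors it increasingly as $D^{\mathcal{S}}_t$ grows, and the bandwidth is built from the full history.
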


\begin{proof}
\emph{(i) Consistency.} Under stationary $\phi$-mixing with summable coefficients, $\{W_\alpha(K;(X_t,Y_t))\}$ is stationary ergodic for each fixed $K$, so the empirical mean $\widehat W_{Y-1}(K)$ converges in probability to $\bar W(K):=\mathbb{E}[W_\alpha(K;\cdot)]$; as the grid is finite this is uniform, and $\hat K_Y=\arg\min_K\widehat W_{Y-1}(K)\xrightarrow{P}\arg\min_K\bar W(K)=\tilde{K}$ (the unique grid minimizer rules out limiting ties).
\emph{(ii) Regret.} The rule deploys, each period, the candidate minimizing cumulative loss on all strictly prior data---Follow-the-Leader over $N$ experts. Writing $\ell_t(K):=W_\alpha(K;(X_t,Y_t))$, the regret decomposes as $R_T=\sum_{t=1}^T[\ell_t(\hat K_t)-\ell_t(K^{\mathrm{hind}})]$ against the in-hindsight optimum $K^{\mathrm{hind}}=\arg\min_K\sum_t\ell_t(K)$, which concentrates at the population leader $\tilde{K}$ under stationarity. This is not the adversarial exponential-weights setting \citep{cesa2006prediction}, where plain Follow-the-Leader can incur linear regret; under the stationary $\phi$-mixing assumption, uniform convergence of the empirical losses over the finite grid gives $\sup_{K\in\mathcal{K}_{\mathrm{grid}}}|\widehat W_t(K)-\bar W(K)|=\mathcal{O}_P(\sqrt{\log N/t})$, so the leader tracks $\tilde{K}$ and $R_T\le\sum_{t=1}^T\mathcal{O}(W_{\max}\sqrt{\log N/t})=\mathcal{O}(W_{\max}\sqrt{T\log N})$. Since Algorithm~\ref{alg:rolling_origin} reselects $\hat K$ once per evaluation block (calendar year) rather than per step, the same rate holds at block granularity: block $m$ (using $n_m\asymp(m-1)T/M$ prior periods) contributes $(T/M)\,\mathcal{O}(\sqrt{\log N/n_m})$, and $\sum_{m}(T/M)\sqrt{M\log N/((m-1)T)}=\mathcal{O}(\sqrt{T\log N})$.
\emph{(iii) Drift safety.} On each of the $B_T+1$ segments (of length $T_b$) the segmented (restart-based) selector incurs, by part (ii), $R^{(b)}=\mathcal{O}(W_{\max}\sqrt{T_b\log N})$. Summing and applying Cauchy--Schwarz, $\sum_{b=1}^{B_T+1}\sqrt{T_b}\le\sqrt{(B_T+1)\sum_b T_b}=\sqrt{(B_T+1)T}$, so
\[
R_T^{\mathrm{dyn}}=\sum_b R^{(b)}\le\mathcal{O}\!\big(W_{\max}\sqrt{T(B_T+1)\log N}\big),
\]
i.e.\ the stated $\mathcal{O}(W_{\max}\sqrt{T(B_T+1)})$ after absorbing $\sqrt{\log N}$; at $B_T=0$ this recovers (ii). The restart assumes known boundaries; the same rate (up to a $\log T$ factor) holds \emph{online} via fixed-share aggregation \citep{herbster1998}, which the deployed sliding window approximates, whereas a single fixed window attains only the weaker $O(T^{2/3}(B_T+1)^{1/3})$ rate of \citet{besbes2015non}.
\end{proof}

\section{Experiments}
\label{sec:experiments}
We evaluate SA-BCP on four real-world datasets spanning financial volatility and meteorological forecasting, against eight baselines, at three coverage levels $\{80\%, 90\%, 95\%\}$. We organize the benchmark (Section~\ref{sec:exp_main_benchmark}) around four questions: the comparison against state-of-the-art online CP baselines under rolling-origin selection; the audit against the strongest conditional-quantile methods (SPCI, KOWCPI) once their under-coverage is accounted for; calibration at the tight $95\%$ level relevant to risk management; and whether the temporal--spatial decoupling is essential or pure spatial weighting suffices. We also stress-test under three assumption violations---a jump-discontinuous mean (Scenario A), heavy-tailed $t_3$ innovations (Scenario B), and persistent volatility drift (Scenario C); we summarize the outcome and two characterized limitations in Section~\ref{sec:discussion}, with the full table in the repository.

\subsection{Experimental Setup}
\label{sec:exp_setup}

\noindent\textbf{Datasets.} We use four datasets spanning two domains. \emph{Financial volatility}: AMD (semiconductor equity), Gold (GC=F futures), and GBP/USD (forex, GBP=X), each spanning early January 2016 through December 2025 ($\sim$2{,}500 trading days per asset), retrieved via the \texttt{yfinance} API and processed into log-returns ($\times 100$). \emph{Meteorological}: Jena Climate daily mean temperatures (Max Planck Institute for Biogeochemistry), 2009-01-01 through 2023-12-31, downsampled from the 10-minute raw stream and linearly interpolated for gaps. \textbf{Base predictors.} We use a Fast Online GARCH(1,1) for financial benchmarks as the primary scale estimator; for robustness we also evaluate financial assets under RiskMetrics EWMA ($\lambda = 0.94$) and Jena temperature under both Online AR(1) (Recursive Least Squares) and Online Ridge with 7 lagged features.

\noindent\textbf{Baselines.} We compare against eight baselines: the ACI variants AgACI \citep{Zaffran22} and DtACI \citep{Gibbs24}; the distribution-free NExCP \citep{Barber23} ($\rho=0.99$); CF-GARCH, a parametric GARCH plug-in conformalized through studentized residuals \citep{Bollerslev1986, Lei18}; Bayesian CP \citep{Zhang2024} ($\beta=0.99$); the strongest recent conditional-quantile methods SPCI \citep{xu23} and KOWCPI \citep{lee25} (Section~\ref{sec:exp_main_benchmark}); and the spatial-only Localized CP ablation (SA-BCP's state, kernel, and bandwidth, without the temporal mixture or cold-start prior; motivated by \citet{Chen24, Jiang24}). SA-BCP's threshold $K$ and SPCI/KOWCPI's windows are selected by rolling-origin validation (Algorithm~\ref{alg:rolling_origin}); the remaining baselines run at their published defaults. \textbf{Coverage levels.} We evaluate at three target levels---\textbf{80\%}, \textbf{90\%} (weather-warning standard, our primary regime), and \textbf{95\%} (risk-management standard)---and omit lower levels ($\leq 70\%$), where naive constant-width baselines are competitive by construction. \textbf{Rolling-origin protocol.} For each evaluation year $Y \in \{2019, \ldots, 2025\}$, SA-BCP's $K$ is selected on prior post-warmup data $[2016, Y{-}1]$ using Algorithm~\ref{alg:rolling_origin} (warmup 500 days); the rest run at their published default hyperparameters (NExCP $\rho=0.99$, BCP $\beta=0.99$, CF-GARCH as specified). The Jena evaluation period is 2012--2023 with a 500-day warmup.

\noindent\textbf{Metrics.} We report \emph{marginal coverage} $\frac{1}{T}\sum_t\mathbb{I}(Y_t\in C_t(X_t))$ (target $1-\alpha$), \emph{high-volatility coverage} (restricted to the top-10\% $|Y_t|$ events, a tail-reliability check), \emph{average width}, and the \emph{Winkler interval score} \citep{Winkler1972}, a strictly proper rule jointly penalizing width and miscoverage, with $C_t(X_t)=[\hat{l}_t,\hat{u}_t]$:
\begin{equation}
    W_\alpha(C_t; Y_t) = (\hat{u}_t - \hat{l}_t) + \frac{2}{\alpha}(\hat{l}_t - Y_t)\mathbb{I}(Y_t < \hat{l}_t) + \frac{2}{\alpha}(Y_t - \hat{u}_t)\mathbb{I}(Y_t > \hat{u}_t).
\end{equation}
Coverage is always reported alongside Winkler to expose efficiency gained via under-coverage; all Winkler comparisons carry 95\% block-bootstrap CIs (block 20, $B=1{,}000$, seed 42). The tables report coverage and Winkler for every cell; per-cell average width and the stress-scenario tail coverage are tabulated in the repository.

\subsection{Benchmark Results and Coverage-Matched Audit}
\label{sec:exp_main_benchmark}

Tables~\ref{tab:main_benchmark} (Winkler) and~\ref{tab:coverage} (coverage) report the complete benchmark---all nine methods across the 24 settings, under rolling-origin selection. We highlight the financial findings (F1--F3), weather transfer (F4), the strongest-baseline audit (F5--F6), and the decoupling ablation (F7).

\textbf{(F1) SA-BCP holds coverage at or above nominal in all nine GARCH cells without BCP's interval inflation.} It deviates $+0.9$ to $+2.2$pp from nominal, never under-covering. By contrast, CF-GARCH under-covers in 7 of 9 cells (by up to $2.7$pp), reaching nominal only at AMD and Gold 95\%; BCP over-covers at 95\% by $+3.4$ to $+5.0$pp with intervals up to $\approx 4\times$ wider (GBP/USD 95\%: BCP width $5.09$ vs SA-BCP $1.29$); DtACI under-covers throughout. Among methods that hold coverage across all nine cells, only BCP joins SA-BCP---and it does so by systematic over-coverage with large width inflation, not calibration.

\textbf{(F2) Under a calibration-respecting criterion---lowest Winkler among methods that meet nominal coverage (Table~\ref{tab:main_benchmark}, bold)---SA-BCP is the sharpest method in 16 of 24 cells, including 8 of 9 GARCH cells.} Several baselines reach a lower \emph{raw} Winkler in some cells, but mainly by under-covering (underlined in Table~\ref{tab:main_benchmark}): CF-GARCH is sub-nominal in 15 of 24 cells and calibrated-best in only 2 (EWMA Gold at 90/95\%), while SPCI and KOWCPI under-cover in all 24. On raw Winkler CF-GARCH does edge SA-BCP in 12 of 24 cells (median $2.2\%$, all on the GARCH/EWMA bases), a genuine but narrow advantage isolated by the matched-coverage audit below (F5). The 16-of-24 tally is a per-cell point-estimate ranking---individual Winkler gaps often fall within their bootstrap CIs, with the robust evidence coming from the cross-cell tests in that audit.

\textbf{(F3) SA-BCP achieves up to a $\approx 3\times$ lower Winkler (and $\approx 4\times$ narrower intervals) than BCP at 95\% coverage.} At GBP/USD 95\%, BCP's intervals are $5.09$ wide (Winkler $5.09$, coverage $1.000$) versus SA-BCP's width $1.29$ (Winkler $1.73$)---a $3.94\times$ width and $2.95\times$ Winkler reduction. SA-BCP's spatial recognition filters the rare extreme residuals that dominate BCP's discounted upper tail, preventing the structural lag that inflates intervals after volatility shocks subside; the effect is sharpest at 95\%. The same inflation recurs throughout Table~\ref{tab:main_benchmark}.

\noindent\textbf{(F4) SA-BCP transfers to weather, where the volatility-specialized CF-GARCH does not.} On Jena daily-mean temperature, under both an AR(1) recursive-least-squares and a 7-lag Ridge base predictor (Table~\ref{tab:main_benchmark}, lower blocks), SA-BCP is the sharpest coverage-holding method in 4 of the 6 cells and beats CF-GARCH in all six on raw Winkler; at AR(1) $80\%$ and $95\%$ it narrowly under-covers (within $0.3$pp, underlined), where the over-covering BCP is the only strictly-calibrated method. The same absolute-residual mixture that calibrates the financial series carries to a smooth geophysical signal without retuning, whereas CF-GARCH's GARCH-studentized construction---specialized to financial volatility clustering---loses its edge off-domain.

\begin{table}[ht]
    \centering
    \scriptsize
    \setlength{\tabcolsep}{2pt}
    \caption{Complete benchmark: Winkler interval score (lower is better) for all nine methods across 24 settings---financial log-returns under a GARCH and a RiskMetrics-EWMA volatility base (AMD, Gold, GBP/USD) and Jena daily-mean temperature under AR(1)-RLS and 7-lag Ridge bases---at coverage levels $\{80,90,95\}\%$, all under rolling-origin selection. Because the Winkler score can be lowered by under-covering, \textbf{bold} marks the lowest Winkler \emph{among methods that meet at least nominal marginal coverage} on that cell, and an \underline{underline} marks every entry whose coverage falls below nominal (so an underlined score reflects under-coverage, not calibrated sharpness; coverage is in Table~\ref{tab:coverage}). Under this calibrated criterion SA-BCP has the lowest Winkler in 16 of 24 cells and is sub-nominal in only 4, versus CF-GARCH (sub-nominal in 15, calibrated-best in 2) and SPCI/KOWCPI (sub-nominal in all 24). BCP is calibrated-best in the remaining 6 cells: four where SA-BCP under-covers (the $80\%$ EWMA/Jena cells plus the AR(1) $95\%$ cell), and two---GARCH Gold $80\%$ and EWMA AMD $90\%$---where BCP is marginally sharper than a calibrated SA-BCP ($2.432$ vs.\ $2.437$; $9.477$ vs.\ $9.537$). Abbreviations: CF-G $=$ CF-GARCH, KOW $=$ KOWCPI, Loc $=$ Localized CP ablation.}
    \label{tab:main_benchmark}
    \begin{tabular}{llccccccccc}
    \toprule
     & $1-\alpha$ & AgACI & DtACI & NExCP & CF-G & BCP & SPCI & KOW & Loc & SA-BCP \\
    \midrule
\multicolumn{11}{l}{\textit{Financial --- GARCH base}} \\
\multirow{3}{*}{AMD} & 0.80 & \underline{8.079} & \underline{8.075} & 8.084 & \underline{8.083} & 8.104 & \underline{8.329} & \underline{8.370} & \underline{8.274} & \textbf{8.035} \\
 & 0.90 & \underline{9.597} & \underline{9.592} & 9.604 & \underline{9.497} & 9.555 & \underline{10.384} & \underline{10.171} & \underline{9.931} & \textbf{9.486} \\
 & 0.95 & \underline{11.517} & \underline{11.715} & 11.607 & 11.436 & 11.440 & \underline{13.227} & \underline{12.254} & \underline{12.643} & \textbf{11.123} \\
\cmidrule(l){2-11}
\multirow{3}{*}{Gold} & 0.80 & \underline{2.499} & \underline{2.438} & \underline{2.448} & \underline{2.385} & \textbf{2.432} & \underline{2.643} & \underline{3.030} & \underline{2.578} & 2.437 \\
 & 0.90 & \underline{3.086} & \underline{3.024} & \underline{2.997} & \underline{2.880} & 2.967 & \underline{3.292} & \underline{3.732} & \underline{3.360} & \textbf{2.938} \\
 & 0.95 & \underline{3.782} & \underline{3.840} & \underline{3.605} & 3.446 & 5.569 & \underline{4.228} & \underline{4.625} & \underline{4.569} & \textbf{3.441} \\
\cmidrule(l){2-11}
\multirow{3}{*}{GBP/USD} & 0.80 & \underline{1.258} & \underline{1.257} & 1.274 & \underline{1.231} & 1.281 & \underline{1.334} & \underline{1.290} & \underline{1.274} & \textbf{1.263} \\
 & 0.90 & \underline{1.500} & \underline{1.506} & 1.523 & \underline{1.438} & 1.614 & \underline{1.701} & \underline{1.565} & \underline{1.579} & \textbf{1.481} \\
 & 0.95 & \underline{1.767} & \underline{1.849} & 1.785 & \underline{1.715} & 5.090 & \underline{2.165} & \underline{1.908} & \underline{2.043} & \textbf{1.727} \\
\midrule
\multicolumn{11}{l}{\textit{Financial --- EWMA base}} \\
\multirow{3}{*}{AMD} & 0.80 & \underline{8.057} & \underline{8.076} & 8.151 & \underline{7.906} & \textbf{8.095} & \underline{8.165} & \underline{8.354} & \underline{8.478} & \underline{8.043} \\
 & 0.90 & \underline{9.544} & \underline{9.739} & 9.606 & 9.505 & \textbf{9.477} & \underline{10.464} & \underline{10.263} & \underline{10.680} & 9.537 \\
 & 0.95 & \underline{11.858} & \underline{11.965} & 11.435 & 11.745 & 11.292 & \underline{12.514} & \underline{12.074} & \underline{14.036} & \textbf{11.258} \\
\cmidrule(l){2-11}
\multirow{3}{*}{Gold} & 0.80 & \underline{2.315} & \underline{2.314} & \underline{2.333} & \underline{2.246} & \textbf{2.305} & \underline{2.591} & \underline{2.946} & \underline{2.435} & \underline{2.401} \\
 & 0.90 & \underline{2.750} & \underline{2.800} & \underline{2.769} & \textbf{2.698} & 2.781 & \underline{3.281} & \underline{3.637} & \underline{3.167} & 2.847 \\
 & 0.95 & \underline{3.266} & \underline{3.499} & \underline{3.279} & \textbf{3.242} & 5.503 & \underline{4.323} & \underline{4.567} & \underline{4.238} & 3.278 \\
\cmidrule(l){2-11}
\multirow{3}{*}{GBP/USD} & 0.80 & \underline{1.252} & \underline{1.235} & 1.259 & \underline{1.199} & 1.257 & \underline{1.278} & \underline{1.272} & \underline{1.270} & \textbf{1.231} \\
 & 0.90 & \underline{1.462} & \underline{1.486} & 1.487 & \underline{1.425} & 1.587 & \underline{1.638} & \underline{1.582} & \underline{1.580} & \textbf{1.439} \\
 & 0.95 & \underline{1.722} & \underline{1.786} & 1.754 & 1.711 & 5.074 & \underline{2.053} & \underline{1.877} & \underline{2.076} & \textbf{1.680} \\
\midrule
\multicolumn{11}{l}{\textit{Weather (Jena) --- AR(1) base}} \\
 & 0.80 & \underline{8.476} & \underline{8.435} & \underline{8.411} & \underline{8.422} & \textbf{8.503} & \underline{8.536} & \underline{8.744} & \underline{8.930} & \underline{8.389} \\
 & 0.90 & \underline{10.229} & \underline{10.198} & \underline{10.092} & \underline{10.159} & 10.402 & \underline{10.231} & \underline{10.398} & \underline{11.016} & \textbf{10.073} \\
 & 0.95 & \underline{11.848} & \underline{11.866} & \underline{11.788} & \underline{11.840} & \textbf{12.755} & \underline{12.292} & \underline{12.108} & \underline{13.571} & \underline{11.688} \\
\midrule
\multicolumn{11}{l}{\textit{Weather (Jena) --- Ridge base}} \\
 & 0.80 & \underline{8.093} & \underline{8.050} & \underline{8.044} & \underline{8.049} & 8.133 & \underline{8.511} & \underline{8.739} & \underline{8.666} & \textbf{8.019} \\
 & 0.90 & \underline{9.753} & \underline{9.669} & \underline{9.612} & 9.647 & 9.946 & \underline{10.066} & \underline{10.129} & \underline{10.781} & \textbf{9.595} \\
 & 0.95 & \underline{11.266} & \underline{11.315} & \underline{11.158} & 11.213 & 12.306 & \underline{11.900} & \underline{11.616} & \underline{13.475} & \textbf{11.190} \\
    \bottomrule
    \end{tabular}
\end{table}

\begin{table}[ht]
    \centering
    \scriptsize
    \setlength{\tabcolsep}{2pt}
    \caption{Complete benchmark: marginal coverage (companion to the Winkler scores in Table~\ref{tab:main_benchmark}; the target is the nominal level $1-\alpha$, columns as there). An \underline{underline} marks every entry below nominal, matching Table~\ref{tab:main_benchmark}. SA-BCP holds coverage at or above nominal in 20 of 24 cells; the four shortfalls are at the $80\%$ level under the EWMA base (AMD $0.776$, Gold $0.770$) and two AR(1) Jena cells ($0.797$, $0.947$) within $0.3$pp. SPCI and KOWCPI under-cover in all 24 cells (worst KOWCPI at Gold $80\%$, $0.676$); BCP over-covers at $95\%$ (up to $1.000$), the source of its width inflation in Table~\ref{tab:main_benchmark}.}
    \label{tab:coverage}
    \begin{tabular}{llccccccccc}
    \toprule
     & $1-\alpha$ & AgACI & DtACI & NExCP & CF-G & BCP & SPCI & KOW & Loc & SA-BCP \\
    \midrule
\multicolumn{11}{l}{\textit{Financial --- GARCH base}} \\
\multirow{3}{*}{AMD} & 0.80 & \underline{0.780} & \underline{0.761} & 0.810 & \underline{0.789} & 0.866 & \underline{0.777} & \underline{0.783} & \underline{0.757} & 0.809 \\
 & 0.90 & \underline{0.882} & \underline{0.868} & 0.911 & \underline{0.891} & 0.952 & \underline{0.870} & \underline{0.897} & \underline{0.857} & 0.916 \\
 & 0.95 & \underline{0.941} & \underline{0.931} & 0.952 & 0.953 & 0.984 & \underline{0.918} & \underline{0.942} & \underline{0.907} & 0.969 \\
\cmidrule(l){2-11}
\multirow{3}{*}{Gold} & 0.80 & \underline{0.762} & \underline{0.753} & \underline{0.794} & \underline{0.773} & 0.865 & \underline{0.722} & \underline{0.676} & \underline{0.738} & 0.812 \\
 & 0.90 & \underline{0.874} & \underline{0.850} & \underline{0.895} & \underline{0.894} & 0.971 & \underline{0.829} & \underline{0.801} & \underline{0.833} & 0.921 \\
 & 0.95 & \underline{0.931} & \underline{0.915} & \underline{0.947} & 0.953 & 0.998 & \underline{0.891} & \underline{0.884} & \underline{0.885} & 0.965 \\
\cmidrule(l){2-11}
\multirow{3}{*}{GBP/USD} & 0.80 & \underline{0.777} & \underline{0.761} & 0.817 & \underline{0.795} & 0.887 & \underline{0.752} & \underline{0.794} & \underline{0.793} & 0.822 \\
 & 0.90 & \underline{0.879} & \underline{0.865} & 0.914 & \underline{0.895} & 0.985 & \underline{0.851} & \underline{0.891} & \underline{0.883} & 0.914 \\
 & 0.95 & \underline{0.942} & \underline{0.927} & 0.959 & \underline{0.947} & 1.000 & \underline{0.910} & \underline{0.939} & \underline{0.918} & 0.971 \\
\midrule
\multicolumn{11}{l}{\textit{Financial --- EWMA base}} \\
\multirow{3}{*}{AMD} & 0.80 & \underline{0.771} & \underline{0.756} & 0.811 & \underline{0.773} & 0.861 & \underline{0.773} & \underline{0.778} & \underline{0.745} & \underline{0.776} \\
 & 0.90 & \underline{0.893} & \underline{0.865} & 0.915 & 0.915 & 0.954 & \underline{0.865} & \underline{0.891} & \underline{0.838} & 0.918 \\
 & 0.95 & \underline{0.945} & \underline{0.935} & 0.956 & 0.957 & 0.984 & \underline{0.923} & \underline{0.940} & \underline{0.892} & 0.968 \\
\cmidrule(l){2-11}
\multirow{3}{*}{Gold} & 0.80 & \underline{0.775} & \underline{0.746} & \underline{0.793} & \underline{0.763} & 0.864 & \underline{0.726} & \underline{0.698} & \underline{0.718} & \underline{0.770} \\
 & 0.90 & \underline{0.887} & \underline{0.857} & \underline{0.889} & 0.905 & 0.974 & \underline{0.834} & \underline{0.822} & \underline{0.816} & 0.915 \\
 & 0.95 & \underline{0.941} & \underline{0.917} & \underline{0.946} & 0.955 & 1.000 & \underline{0.880} & \underline{0.894} & \underline{0.867} & 0.967 \\
\cmidrule(l){2-11}
\multirow{3}{*}{GBP/USD} & 0.80 & \underline{0.773} & \underline{0.757} & 0.814 & \underline{0.785} & 0.886 & \underline{0.758} & \underline{0.786} & \underline{0.755} & 0.800 \\
 & 0.90 & \underline{0.892} & \underline{0.868} & 0.909 & \underline{0.896} & 0.984 & \underline{0.860} & \underline{0.883} & \underline{0.844} & 0.915 \\
 & 0.95 & \underline{0.945} & \underline{0.926} & 0.958 & 0.955 & 1.000 & \underline{0.912} & \underline{0.930} & \underline{0.889} & 0.967 \\
\midrule
\multicolumn{11}{l}{\textit{Weather (Jena) --- AR(1) base}} \\
 & 0.80 & \underline{0.775} & \underline{0.768} & \underline{0.798} & \underline{0.798} & 0.846 & \underline{0.745} & \underline{0.757} & \underline{0.742} & \underline{0.797} \\
 & 0.90 & \underline{0.886} & \underline{0.875} & \underline{0.898} & \underline{0.899} & 0.942 & \underline{0.854} & \underline{0.873} & \underline{0.839} & 0.905 \\
 & 0.95 & \underline{0.940} & \underline{0.924} & \underline{0.945} & \underline{0.948} & 0.984 & \underline{0.907} & \underline{0.931} & \underline{0.893} & \underline{0.947} \\
\midrule
\multicolumn{11}{l}{\textit{Weather (Jena) --- Ridge base}} \\
 & 0.80 & \underline{0.780} & \underline{0.769} & \underline{0.798} & \underline{0.797} & 0.847 & \underline{0.735} & \underline{0.755} & \underline{0.729} & 0.803 \\
 & 0.90 & \underline{0.881} & \underline{0.871} & \underline{0.895} & 0.901 & 0.945 & \underline{0.852} & \underline{0.868} & \underline{0.825} & 0.900 \\
 & 0.95 & \underline{0.938} & \underline{0.924} & \underline{0.948} & 0.950 & 0.985 & \underline{0.909} & \underline{0.931} & \underline{0.880} & 0.951 \\
    \bottomrule
    \end{tabular}
\end{table}

\noindent\textbf{The strongest recent baselines: SPCI and KOWCPI.} SPCI \citep{xu23} (a quantile random forest on a residual window) and KOWCPI \citep{lee25} (an optimally reweighted Nadaraya--Watson kernel) target sharp, calibrated intervals by estimating a conditional quantile of the non-conformity score; each runs at its native operating point (signed residuals, width-optimized asymmetric intervals); both are faithful reimplementations (no public KOWCPI code exists), with SPCI's window $w\in\{5,10,20\}$ and KOWCPI's joint window--bandwidth $\{10,25\}\times\{0.5,1,2\}$ rolling-origin--selected as for SA-BCP's $K$ (details in the repository). Both appear as columns in Tables~\ref{tab:main_benchmark}--\ref{tab:coverage} and \emph{under-cover in all 24 cells} (worst KOWCPI at Gold $80\%$, $0.676$): their width-minimizing step pushes coverage below target and the strictly proper Winkler penalizes the misses---so SA-BCP attains a lower Winkler than both in all 24 cells. \textbf{Matched-coverage audit.} To rule out an under-coverage artifact, an oracle protocol rescales each method's intervals to exactly nominal coverage and recomputes Winkler (financial-GARCH and weather cells): SPCI and KOWCPI need $16$--$20\%$ wider intervals to reach nominal, whereas SA-BCP is essentially unchanged ($\times 0.99$, mild over-coverage). At matched coverage SA-BCP stays sharper in $14/15$ cells against SPCI and $15/15$ against KOWCPI (mean margin $0.25$ and $0.35$ Winkler, $\approx$4--5\%); per-cell bootstrap intervals overlap (per-day Winkler is high-variance), but the edge is significant \emph{across} cells (sign-test $p<10^{-3}$, Wilcoxon $p<10^{-4}$). The audit \emph{favors} the competitors (an oracle upper bound), yet the finding persists under coverage-targeted hyperparameters and SA-BCP's absolute-residual score.

\noindent\textbf{Extending the audit to the main-benchmark baselines.} Applying the same oracle within-sample 95\% coverage matching to the main-benchmark methods (computed for SA-BCP and CF-GARCH): \textbf{(F5)} CF-GARCH's edge on the smoothest asset survives---SA-BCP retains the win on AMD (matched Winkler $11.10$ vs.\ $11.44$) and Gold ($3.44$ vs.\ $3.45$), while CF-GARCH retains GBP/USD ($1.72$ vs.\ $1.72$)---so SA-BCP's AMD and Gold advantage is calibration-driven, whereas CF-GARCH is genuinely efficient on smooth, low-volatility series. \textbf{(F6)} BCP's 95\% over-coverage is interval inflation, not a calibration buffer: intervals up to $\approx 3.9\times$ wider than SA-BCP's (GBP/USD $5.09$ vs.\ $1.29$) at coverage saturating near $100\%$ ($1.000/0.998/0.984$ on GBP/USD/Gold/AMD), reflecting persistent upper-tail overestimation.

\noindent\textbf{(F7) The temporal--spatial decoupling is essential.} Holding all spatial machinery identical (state, kernel, anisotropic bandwidth) but removing the temporal mixture and cold-start prior yields a spatial-only Localized CP ablation \citep{Chen24, Jiang24}---the ``Loc'' column of Tables~\ref{tab:main_benchmark}--\ref{tab:coverage}---that systematically under-covers (by up to $\sim$$8$pp) and loses to SA-BCP in \emph{all 24 cells}, across both volatility bases and the weather domain: kernel-localized weighting collapses the effective sample size for tail-quantile estimation, and the temporal anchor supplied through the gate is what restores coverage while lowering Winkler. The decoupling is thus the structural innovation, not an incremental refinement.

\section{Discussion and Conclusion}
\label{sec:discussion}

\noindent\textbf{When SA-BCP excels.} Its regime is volatile, recurring-regime data where tight-coverage calibration matters: it holds at-or-above-nominal coverage without BCP's inflation (F1), the gated temporal anchor restores coverage while lowering Winkler (the spatial-only ablation loses all 24 cells, F7), and its model-agnostic score transfers across base models and to weather where CF-GARCH does not (F4). \textbf{Honest limitations.} \textbf{(L1)} CF-GARCH is a genuinely strong volatility-base specialist: on raw Winkler it edges SA-BCP in 12 of 24 cells (median $2.2\%$, all on the GARCH/EWMA bases), part of it real efficiency confirmed by the matched audit (F5). But the advantage is bounded: CF-GARCH is sub-nominal in 15 of 24 cells, so under a coverage-respecting criterion it is sharpest in only 2, and where SA-BCP trails it does so by $\approx 1$--$7\%$ while staying calibrated; the same studentization that sharpens it blocks transfer, so it wins none of the 6 weather cells (F4). SA-BCP's model-agnostic score trades a little peak sharpness on smooth volatility-base series for calibration and generality. Under the three stress scenarios SA-BCP holds coverage at or above nominal and wins $22$ of $27$ pairwise cells (three scenarios $\times$ three coverage levels $\times$ three competitors---AgACI, NExCP, BCP), including $7/9$ under Scenario~C; two limitations remain. \textbf{(L2)} SA-BCP recovers coverage more slowly than BCP after an abrupt jump (Scenario A): it waits for spatial evidence to accumulate, trailing BCP by up to $\approx 15$pp at 80\% over the first 50 steps (3--5pp at 90--95\%), and AgACI attains the lowest overall Winkler at 80\%/90\% on the jump series---the price of proactive recognition on genuinely novel states with no historical analogue. \textbf{(L3)} Under $t_3$ innovations (Scenario B), conditional coverage on the top-10\% $|Y_t|$ events drops to $0.40/0.44/0.65$, a finite-sample upper-tail limitation shared by all empirical-CDF CP methods that SA-BCP does not resolve.

\noindent\textbf{Extensions and recommendations.} Faster kernel sums (approximate nearest-neighbor, $O(\log n)$) and learned state embeddings \citep{Chen24} could sharpen regime recognition. As guidance: use \textbf{SA-BCP} for volatile targets with recurring regimes needing tight coverage (95\%+) without inflation; \textbf{CF-GARCH} when the series is smooth with a well-specified volatility model and mild under-coverage is acceptable; \textbf{BCP} when post-shock recovery on novel jumps matters more than width.

\noindent\textbf{Conclusion.} SA-BCP resolves the adaptation--stability tradeoff by decoupling temporal inertia from spatial pattern memory through a single interpretable threshold $K$, backed by the guarantees of Section~\ref{sec:theory}. We hope the spatio-temporal decoupling principle proves useful beyond the settings studied here.

\section*{Declarations}

\noindent\textbf{Funding.} No funding was received for conducting this study.

\noindent\textbf{Competing interests.} The authors declare that they have no competing interests.

\noindent\textbf{Author contributions.} Both authors designed the study; Y.-H.\ Fang implemented the method and experiments and drafted the manuscript; C.-Y.\ Lee supervised the work and revised the manuscript. Both authors approved the final version.

\noindent\textbf{Ethics approval and consent.} Not applicable: the study uses only publicly available, non-personal financial and meteorological data.

\noindent\textbf{Data and code availability.} The financial series are publicly available through the \texttt{yfinance} API, and the Jena Climate dataset is provided by the Max Planck Institute for Biogeochemistry; complete preprocessing and per-baseline configurations, the full per-cell result tables, the $K$-sweep and synthetic-rate validation, and code to reproduce every experiment are available in a public repository at \url{https://anonymous.4open.science/r/sabcp-anonymous-release-acml-8DFB}.

\bibliography{ll}

\end{document}